\documentclass{article}

\PassOptionsToPackage{numbers, compress}{natbib}
\usepackage[final]{neurips_2026}
\makeatletter
\providecommand{\@trackname}{Paper Under Review.}
\makeatother
\usepackage[utf8]{inputenc} 
\usepackage[T1]{fontenc}    
\usepackage{hyperref}       
\hypersetup{hidelinks}
\usepackage{url}            
\usepackage{booktabs}       
\usepackage{amsfonts}       
\usepackage{nicefrac}       
\usepackage{microtype}      
\usepackage{xcolor}         

\usepackage{algorithm}
\usepackage{algorithmicx}
\usepackage{algpseudocode}
\usepackage{amsthm}
\usepackage{amsmath}

\newtheorem{theorem}{Theorem}

\newtheorem{assumption}{Assumption}
\newtheorem{proposition}{Proposition}

\newtheorem{remark}{Remark}

\usepackage{graphicx}
\usepackage[table]{xcolor} 
\usepackage[dvipsnames]{xcolor}
\usepackage{wrapfig}
\usepackage{caption}
\usepackage{placeins}

\usepackage{booktabs}
\usepackage{multirow}
\usepackage{graphicx}
\usepackage{array}
\usepackage{makecell}

\definecolor{darkgreenrank}{RGB}{0,100,0}
\definecolor{darkbluerank}{RGB}{0,70,140}
\definecolor{darkorangerank}{RGB}{180,90,0}
\definecolor{bestblue}{HTML}{E8F2FF}

\newcommand{\best}[1]{{\color{darkgreenrank}\textbf{\underline{#1}}}}
\newcommand{\second}[1]{{\color{darkbluerank}\underline{#1}}}
\newcommand{\third}[1]{{\color{darkorangerank}\underline{#1}}}
\newcommand{\bestcell}[1]{\cellcolor{bestblue}\best{#1}}
\newcommand{\ms}[2]{#1{\scriptsize$\,\pm\,#2$}}

\title{Tools as Continuous Flow for Evolving \\ Agentic Reasoning}

\author{
Tairan Huang$^{1,}$\thanks{Equal Contribution.},
Siyu Shang$^{1,*}$,
Qiang Chen$^{2,}$\thanks{Leading the Project.},
Xiu Su$^{1,}$\thanks{Corresponding Authors.},
Yi Chen$^{2,\ddagger}$
\\
$^{1}$Central South University \\
$^{2}$The Hong Kong University of Science and Technology \\
\texttt{tairanhuang@csu.edu.cn},
\texttt{siyushang@csu.edu.cn}, \\
\texttt{qchencw@connect.ust.hk},
\texttt{xiusu1994@csu.edu.cn},
\texttt{yichen@ust.hk}
}
\author{%
Tairan Huang$^{1,*}$,
Siyu Shang$^{1,*}$,
Qiang Chen$^{2,*,\dagger}$,
Xiu Su$^{1,\ddagger}$,
Yi Chen$^{2,\ddagger}$
\\
$^{1}$Central South University \\
$^{2}$The Hong Kong University of Science and Technology \\
\texttt{tairanhuang@csu.edu.cn},
\texttt{siyushang@csu.edu.cn}, \\
\texttt{qchencw@connect.ust.hk},
\texttt{xiusu1994@csu.edu.cn},
\texttt{yichen@ust.hk}
}

\begin{document}
\maketitle

\begingroup
\renewcommand{\thefootnote}{\fnsymbol{footnote}}
\footnotetext[1]{Equal Contribution.}
\footnotetext[2]{Leading the Project.}
\footnotetext[3]{Corresponding Authors.}
\endgroup

\begin{abstract}
Large Language Models (LLMs) have demonstrated remarkable capabilities in orchestrating external tools for complex reasoning tasks. 
However, existing methods predominantly rely on a step-wise paradigm that optimizes one discrete decision at a time, limiting their global planning perspective and causing errors to accumulate over long horizons. 
To overcome these limitations, we propose Tools as Continuous Flow for Evolving Agentic Reasoning (FlowAgent), which reconceptualizes discrete tool chaining as continuous trajectory generation within a semantic space. 
At each environment phase, a single conditional-flow solve generates the complete provisional future tool trajectory, with every latent anchor corresponding to exactly one tool at that plan position. 
We further construct a plan-level supervision and evaluation protocol from public tool-use resources by pairing each observed execution prefix with its complete remaining trajectory under strict forward-only execution. 
Theoretically, we establish formal bounds on utility convergence and prove that our continuous formulation fundamentally guarantees robust generalization and error attenuation.
Empirical evaluations show that FlowAgent achieves superior robustness and adaptability in long-horizon reasoning tasks.
Our code and datasets are available at \url{https://github.com/ssy166/FlowPlan}.
\end{abstract}

\section{Introduction}
Large Language Models (LLMs) have evolved into capable agents that utilize external tools. 
By interacting with search engines, APIs, and transactional systems, these models ground their reasoning in real-world environments \cite{llm1,llm2,llm3,llm4}. 
This paradigm allows agents to execute complex operations and substantially expands their utility beyond static text generation \cite{tool1,tool2}. 
Consequently, they are increasingly used to automate sophisticated workflows requiring sequential logic and dynamic environment interaction \cite{work1,work2,work3,work4}.

Recent methods formulate tool planning as an autoregressive generation task over a discrete action space \cite{ToolRL,Plan1,plan2,plan3}. 
At each step, the agent observes the current context, evaluates the available options, and selects one tool for execution. 
Reinforcement learning algorithms \cite{RL1,RL2} and prompting strategies \cite{prompt,prompt2} can improve these local decisions, but the underlying process remains rooted in next-action generation and does not explicitly model the structure of the complete remaining plan.

\begin{figure*}[h]
  \centering
  \includegraphics[width=1\linewidth]{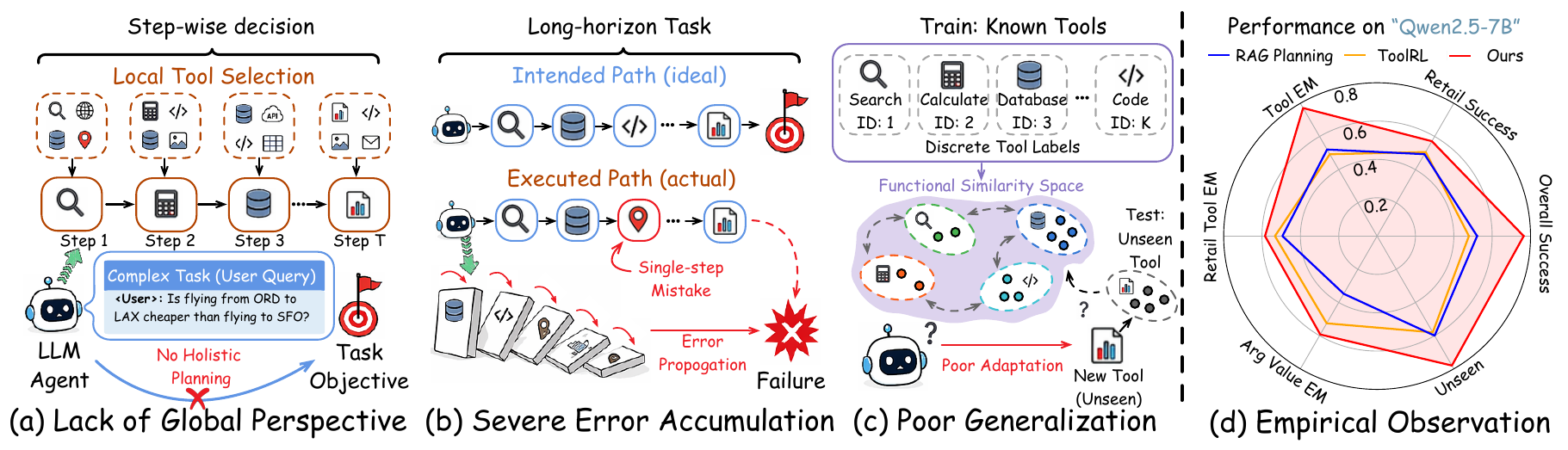}
  \caption{(a-c) Conceptual illustrations of three limitations observed in discrete tool-reasoning paradigms: \textit{limited global perspective}, \textit{error accumulation}, and \textit{weak generalization to unseen tools}.
  As evidenced in (d), the performance of the traditional step-wise paradigm is inadequate under the plan-level benchmark, highlighting the need for a paradigm shift toward global plan-level reasoning.
  }
  \label{motivation}
\end{figure*}

These step-wise paradigms exhibit three fundamental limitations when handling complex scenarios, as illustrated in Figure~\ref{motivation}. 
(1) \textit{Lack of global perspective.} Existing methods optimize localized tool selection but do not directly represent how the current decision participates in the remaining workflow. 
(2) \textit{Severe error accumulation.} In long-horizon tasks, a single suboptimal choice can alter the environment and derail the subsequent reasoning chain. 
(3) \textit{Poor generalization to unseen tools.} Treating tools as isolated categorical labels obscures their functional similarities and weakens transfer to novel schemas.

To address these inherent limitations, we propose Tools as Continuous Flow for Evolving Agentic Reasoning (FlowAgent), a novel paradigm that fundamentally shifts from myopic step-wise selection to global plan-level reasoning by reconceptualizing discrete tool chaining as continuous trajectory generation within a semantic space. 
To systematically evaluate this paradigm, we introduce the first closed-loop benchmark dedicated to plan-level agentic reasoning in dynamic real-world environments. 
Specifically, the proposed FlowAgent formulates multi-step planning as continuous trajectory evolution supervised by expert demonstrations, providing a global planning perspective to ensure coherent reasoning. 
These latent plans are subsequently translated into precise discrete actions within a closed-loop execution scheme, which dynamically absorbs environmental feedback and explicitly aligns the entire process with optimal plan-level utility. 
Theoretically, we establish formal bounds on utility convergence and mathematically prove that our continuous formulation guarantees semantic-driven tool generalization and fundamental error attenuation., encoder choices, trajectory targets, replanning horizons, and inference overhead.

The contributions of this work are summarized below.
\begin{itemize}
    \item \textbf{Plan-level Benchmark:} We introduce the plan-level closed-loop benchmark dedicated to plan-level agentic reasoning in dynamic real-world environments. 
    By integrating diverse real-world domains, this setup systematically evaluates the ability to dynamically absorb environmental feedback during multi-step execution.

    \item \textbf{Novel Paradigm:} We propose FlowAgent, a novel paradigm for evolving agentic reasoning.
    FlowAgent reconceptualizes discrete tool chaining as continuous trajectory generation within a semantic space via conditional flow matching, which shifts the decision process from myopic step-wise selection to global plan-level reasoning.
    
    \item \textbf{Theoretical Foundation:} We provide a rigorous mathematical framework that establishes explicit bounds on plan utility convergence and generalization to unseen tools. 
    Furthermore, we formally prove that our closed-loop execution scheme strictly attenuates error accumulation in long-horizon reasoning tasks.
    
    \item \textbf{Empirical Superiority:} Extensive experiments demonstrate that FlowAgent significantly outperforms baseline methods, exhibiting superior robustness in long-horizon tasks and adaptability to unseen toolsets.
\end{itemize}

\begin{figure*}[th]
    \centering
    \includegraphics[width=1\linewidth]{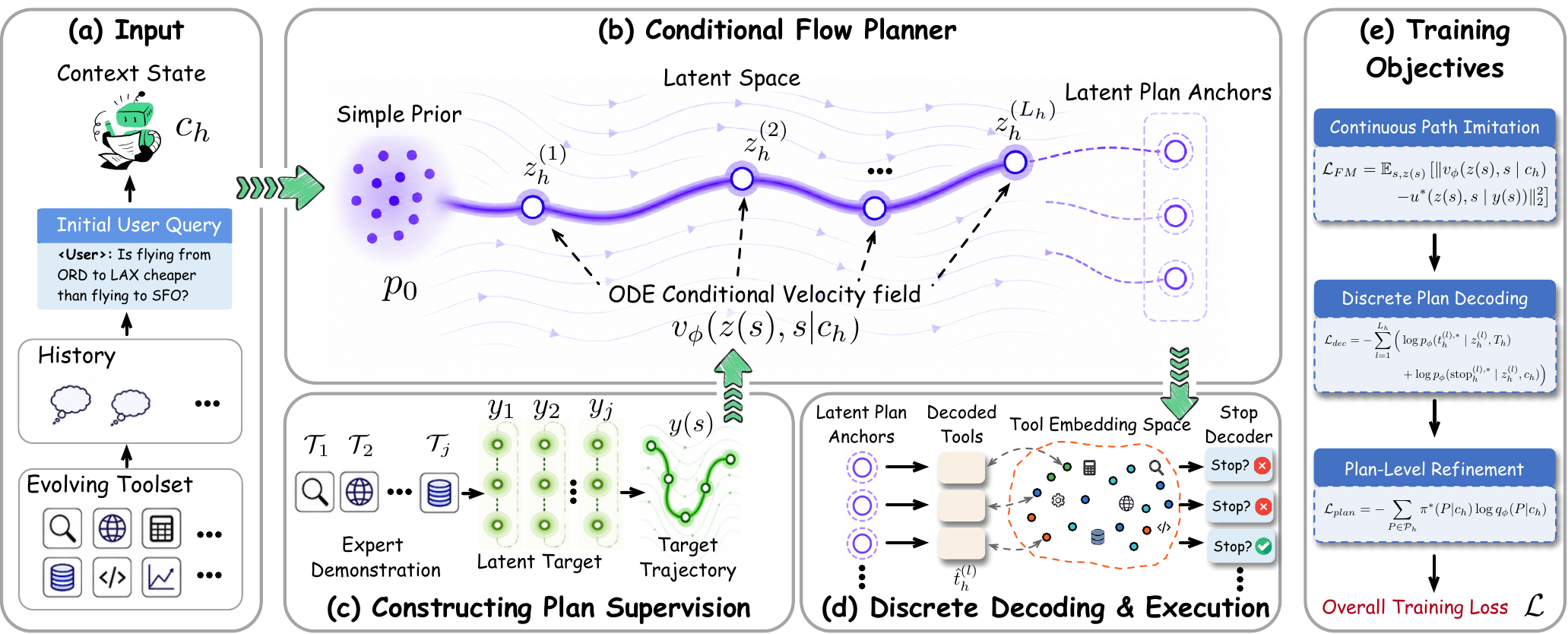}
    \caption{The overall framework of \textbf{FlowAgent}, which reconceptualizes discrete tool chaining as continuous trajectory generation to shift from myopic step-wise selection to plan-level reasoning, including \textit{Conditional Flow Planner}, \textit{Constructing Plan Supervision}, \textit{Discrete Decoding and Execution}, and \textit{Training Objectives}. 
    }
    \label{framework}
\end{figure*}

\section{Related Work}
\noindent\textbf{LLM-based Tool Reasoning.}
The efficacy of LLM agents relies heavily on their ability to interact with external tools \cite{SkillCraft,tool_agent}. 
Early paradigms attempted to mitigate this through explicit prompt engineering \cite{Engineering,Engineering2} or interleaving chain-of-thought reasoning with API invocations \cite{Chain,Chain2}. 
Recent advancements have transitioned towards autonomous tool integration \cite{auto,Autotir,UniToolBench}, formalizing the selection and execution of diverse APIs. 
ToolRL \cite{ToolRL}, ToolPlanner \cite{Toolplanner}, Agent-R1 \cite{RL2}, and AutoTIR \cite{Autotir} optimize discrete tool interactions through trajectory rewards, path feedback, or modular agentic RL. 
However, these systems predominantly operate within a discrete action space, treating tool planning as isolated categorical selections rather than globally coherent trajectories. 
In contrast, FlowAgent reconceptualizes discrete tool chaining as continuous trajectory generation within a semantic space, shifting the decision process from myopic step-wise selection to global plan-level reasoning to unlock robust adaptability to unseen toolsets.

\noindent\textbf{LLMs Agent Planning.}
LLMs Agent planning is essential for decomposing complex goals into executable sub-tasks, while simple autoregressive generation often fails to maintain long-term coherence \cite{agent_plan1,agent_plan2,agent_plan3,agent_plan4}. 
Early methods focused on linear reasoning chains \cite{linear,linear2} or iterative self-correction \cite{Correction,Correction2} to refine outputs based on internal heuristics. 
Recent research has moved toward sophisticated search-based architectures to explore and evaluate multiple reasoning branches \cite{search,search2}. 
Furthermore, specialized planners \cite{Voyager,Agentgen} incorporate environmental feedback to adjust planning trajectories during execution. 
These methods still rely on discrete search within a step-wise framework and can accumulate errors over long horizons. 
In contrast, FlowAgent bypasses the limitations of myopic discrete search by generating continuous trajectories, facilitating global plan-level reasoning that maintains structural coherence.


\section{Methodology}
\label{method}

\subsection{Problem Definition and Formalization}
We first introduce problem definition and formalization, followed by a detailed description of the design of FlowAgent.
The overall framework is shown in Figure \ref{framework}.

\noindent\textbf{Environment, State, and Tool Representation.}
Consider an agentic reasoning task evolving over discrete phases $h \in \{1, \dots, H\}$. 
At phase $h$, the system maintains an available toolset $T_h=\{t_1, \dots, t_{|T_h|}\}$, which dynamically evolves over time. 
To enable unseen tool generalization, each tool $t \in T_h$ is parameterized by a continuous semantic embedding $e_t \in \mathbb{R}^d$, jointly encoding its functional description and usage constraints. 
We define the context state at phase $h$ as $c_h = (x, \tau_{<h}, T_h)$, where $x$ denotes the initial query, and $\tau_{<h} = \big((r_1, \hat{t}_1, o_1), \dots, (r_{h-1}, \hat{t}_{h-1}, o_{h-1})\big)$ encapsulates the historical reasoning trajectory. Specifically, at each past step $i < h$, $r_i$ denotes the generated rationale, $\hat{t}_i$ is the executed tool, and $o_i$ represents the environmental observation returned after execution.

We distinguish the \emph{environment phase} $h$ from the \emph{future plan position} $l$. At one phase $h$, FlowAgent generates all $L_h$ future positions jointly. It does not invoke the planner $L_h$ times.

Instead of directly predicting discrete token-level actions, our framework formally models a continuous latent plan object:
\begin{equation}
    P_h = (z_h^{(1)}, z_h^{(2)}, \dots, z_h^{(L_h)}),
\end{equation}
where each latent anchor $z_h^{(l)} \in \mathbb{R}^d$ corresponds to a future planning step. 
Through discrete decoding, this continuous path projects to a provisional tool sequence $\hat{t}_h^{(l)} \in T_h$ for $l \in \{1, \dots, L_h\}$, with exactly one latent anchor assigned to each plan position.

\noindent\textbf{Plan-Level Objective.}
Traditional step-wise tool selection focuses on optimizing a localized policy $t_h \sim \pi(t|c_h, T_h)$.
In contrast, we elevate the objective to model a plan-level continuous distribution $P_h \sim q_\phi(P|c_h)$. 
The execution decision relies on an embedding-based decoder mapping the continuous anchor to candidate tools via a distance-based probability:
\begin{equation}
    p_\phi(t|z_h^{(l)}, T_h) \propto \exp\left(-\frac{||z_h^{(l)} - e_t||_2^2}{\epsilon}\right),
\label{eq2}
\end{equation}
where $\epsilon$ is the temperature scaling factor controlling the discrete decoding margin.

To systematically address the global structure of multi-tool tasks, the overall training target is formulated as plan utility maximization $\max_\phi \mathbb{E}[U(P_h; c_h)]$. 
The utility function $U$ explicitly unifies task performance with practical constraints:
\begin{equation}
    U(P_h; c_h) = \text{Acc}(P_h; c_h) - \lambda_{cost}\text{Cost}(P_h) - \lambda_{red}\text{Red}(P_h) + \lambda_{cons}\text{Cons}(P_h; c_h),
\label{eq3}
\end{equation}
where $\text{Acc}(\cdot)$ measures final reasoning quality, $\text{Cost}(\cdot)$ penalizes redundant calls or latency, $\text{Red}(\cdot)$ penalizes cyclic failures, $\text{Cons}(\cdot)$ measures alignment with the observation context, and $\lambda_{\{cost, red, cons\}} > 0$ are balancing hyperparameters.

\noindent\textbf{Closed-Loop Planning Objective.}
Since tool outputs inevitably alter subsequent environmental configurations, we explicitly formalize a closed-loop execution paradigm. 
Upon receiving the observation $o_h$ from the executed tool $\hat{t}_h$, the context state is deterministically updated via the operator $\mathcal{U}$:
\begin{equation}
    c_{h+1} = \mathcal{U}(c_h, \hat{t}_h, o_h).
\label{eq4}
\end{equation}
Unlike open-loop planners that rigidly execute long-horizon chains, our system dynamically resamples the latent plan conditioned on the updated state:
\begin{equation}
    P_{h+1} \sim q_\phi(P|c_{h+1}).
\end{equation}
Consequently, the approach operates as a receding-horizon latent plan generator, using real feedback to limit the propagation of stale early-stage predictions.

\subsection{Conditional Flow Planner}
We define $s \in [0,1]$ as continuous planning time within one environment phase. Given $c_h$, the latent path $z_h(s)$ is governed by an ordinary differential equation (ODE) driven by a conditional velocity field $v_\phi$:
\begin{equation}
    \frac{dz_h(s)}{ds} = v_\phi(z_h(s), s | c_h), \quad z_h(0) \sim p_0,
    \label{eq6}
\end{equation}
where $p_0$ is a simple prior distribution. Let $s_l=(l-1)/(L_h-1)$ for $L_h>1$ (and $s_1=1$ for $L_h=1$). A single numerical solve retains the path states at these positions:
\begin{equation}
    P_h = (z_h^{(1)}, z_h^{(2)}, \dots, z_h^{(L_h)}),
    \qquad z_h^{(l)}:=z_h(s_l).
\end{equation}
Each anchor $z_h^{(l)} \in \mathbb{R}^d$ serves as a continuous semantic directive for exactly one future plan position. 
Crucially, the anchors in $P_h$ are obtained together from one conditional-flow solve rather than from repeated next-tool predictions.

\subsection{Constructing Plan Supervision}
Training the conditional flow planner requires continuous target trajectories derived from discrete expert demonstrations.
Given an expert reasoning chain of length $m$, formally denoted as $\mathcal{T}^* = \big((r_1, t_1, o_1), \dots, (r_m, t_m, o_m)\big)$, we construct a dense latent plan state $y_j \in \mathbb{R}^d$ for each discrete step $j \in \{1, \dots, m\}$:
\begin{equation}
    y_j = W_t e_{t_j} + W_r \text{Enc}(r_j) + W_o \text{Enc}(o_{j-1}) + W_p \nu_j,
\end{equation}
where $o_{j-1}$ denotes the previous observation (a learned null-observation token is used for $j=1$), $e_{t_j}$ represents the semantic tool embedding, $\nu_j$ is a learnable phase embedding, and $W_{\{t,r,o,p\}}$ are projection matrices. 
By default, $\operatorname{Enc}$ is the frozen transformer trunk of Qwen2.5-7B-Instruct used only as a text feature extractor. 

For each phase $h$ of the expert trace, the observed prefix $(x,T_h,\tau_{<h})$ forms the conditioning state $c_h$, while the complete suffix $(y_h,\ldots,y_m)$ forms the trajectory target. 
Thus, a length-$m$ trace yields $m$ distinct prefix-to-remaining-trajectory pairs, and no future action is exposed in $c_h$. 
For a suffix of length $L_h=m-h+1$, relabel its anchors as $\tilde y_l=y_{h+l-1}$ for $l=1,\ldots,L_h$. 
To formulate the regression objective over normalized time $s \in [0,1]$, we construct a continuous target path $y_h(s)$ by piecewise linear interpolation at $s_l=(l-1)/(L_h-1)$:
\begin{equation}
    y_h(s) = \tilde y_l + \frac{s-s_l}{s_{l+1}-s_l}(\tilde y_{l+1}-\tilde y_l),
    \quad s\in[s_l,s_{l+1}].
\end{equation}
For $L_h=1$, the path is constant. This trajectory is the explicit regression target for the conditional velocity field.

\subsection{Discrete Decoding and Execution}
To instantiate the continuous latent plan into executable actions, the framework maps the generated latent anchor $z_h^{(l)}$ to the dynamically evolving discrete tool space $T_h$. 
According to the distance-based distribution defined above, every latent anchor is first decoded into a provisional tool:
\begin{equation}
    \hat{t}_h^{(l)} \sim p_\phi(t|z_h^{(l)}, T_h).
    \label{eq10}
\end{equation}
To infer the remaining plan length and prevent over-generation, we concurrently learn a parameterized stop mechanism. 
This module estimates the termination probability for each plan position conditioned on the local anchor and the global context state:
\begin{equation}
    p_\phi(\text{stop}|z_h^{(l)}, c_h).
\end{equation}
The first position whose probability exceeds a confidence threshold marks the provisional end of the decoded trajectory.

Crucially, we eschew the rigid open-loop execution of the entire planned sequence $P_h$. At phase $h$, the agent executes strictly the leading tool $\hat{t}_h^{(1)}$ corresponding to the first latent anchor. The system retrieves the real-world observation $o_h$ and deterministically updates the context state:
\begin{equation}
    c_{h+1} = \mathcal{U}(c_h, \hat{t}_h^{(1)}, o_h).
    \label{eq12}
\end{equation}
Following this localized execution, the remaining unexecuted anchors $(z_h^{(2)}, \dots, z_h^{(L_h)})$ are discarded. 
A new continuous latent trajectory $P_{h+1}$ is dynamically resampled conditioned on the updated state $c_{h+1}$, establishing a formally defined plan-execute-replan loop.

\subsection{Training Objectives}
To optimize the conditional flow planner and discrete execution, we propose a three-stage training pipeline, including \textit{Continuous Path Imitation}, \textit{Discrete Plan Decoding}, and \textit{Plan-Level Refinement}.

\noindent\textbf{Continuous Path Imitation.}
We first train the flow planner to imitate the expert trajectories via conditional flow matching. 
The predicted velocity field $v_\phi$ is optimized to match the teacher vector field $u^*$ induced by the interpolated target path $y(s)$:
\begin{equation}
    \mathcal{L}_{FM} = \mathbb{E}_{s, z(s)}\left[||v_\phi(z(s), s|c_h) - u^*(z(s), s|y(s))||_2^2\right].
\end{equation}

\noindent\textbf{Discrete Plan Decoding.}
We optimize the embedding-based tool decoder and the parameterized stop mechanism. 
Given the ground-truth tool identity $t_h^{(l),*}$ and the position-specific stopping indicator $\text{stop}_h^{(l),*} \in \{0,1\}$, we minimize the aggregated cross-entropy loss over the extracted anchors:
\begin{equation}
    \mathcal{L}_{dec} = -\sum_{l=1}^{L_h} \Big( \log p_\phi(t_h^{(l),*}|z_h^{(l)}, T_h) + \log p_\phi(\text{stop}_h^{(l),*}|z_h^{(l)}, c_h) \Big).
\end{equation}

\noindent\textbf{Plan-Level Refinement.}
Finally, to align the generated latent trajectories with the global multi-tool objective, we perform plan-level reinforcement learning over sampled candidate plans $P \in \mathcal{P}_h$. 
We bridge the utility ranking with policy optimization:
\begin{equation}
    \mathcal{L}_{plan} = -\sum_{P \in \mathcal{P}_h} \pi^*(P|c_h) \log q_\phi(P|c_h),
\end{equation}
where the target policy distribution $\pi^*(P|c_h)$ is weighted by holistic plan utility function $U(P; c_h)$:
\begin{equation}
    \pi^*(P|c_h) \propto q_{\text{old}}(P|c_h) \exp(U(P; c_h)),
\end{equation}
and $q_{\text{old}}(P|c_h)$ represents the generative policy from the previous optimization iteration.

The overall training target is formulated as a weighted combination of these objectives:
\begin{equation}
    \mathcal{L} = \mathcal{L}_{FM} + \lambda_1 \mathcal{L}_{dec} + \lambda_2 \mathcal{L}_{plan} + \lambda_3 \mathcal{L}_{cons},
\end{equation}
where $\lambda_{\{1,2,3\}} > 0$ are balancing coefficients, and $\mathcal{L}_{cons} = \frac{1}{L_h} \sum_{l=1}^{L_h} || z_h^{(l)} - W_{c} \text{Enc}(c_h) ||_2^2$ acts as an auxiliary constraint explicitly regularizing the semantic consistency between the generated latent anchors and the prevailing observational context.

\section{Theoretical Analysis}
\label{Theorem}
We next connect continuous path approximation to discrete decoding, utility deviation, unseen-tool transfer, and feedback-conditioned error propagation. 
These are finite-horizon sufficient-condition results, not universal guarantees: each conclusion applies only when the corresponding geometric, margin, regularity, and feedback assumptions below hold.

\subsection{Foundational Assumptions}
To ensure the tractability of the latent planning problem, we introduce the following structural assumptions based on the system's geometric properties.

\begin{assumption}[Semantic Regularity]\label{asm:smoothness}
The train and test tool embeddings lie in a compact region of a smooth manifold $\mathcal{M}\subset\mathbb{R}^d$. On this region, the excess planning risk induced by replacing one tool embedding with another is $C_{\mathcal M}$-Lipschitz in their Euclidean distance.
\end{assumption}

\begin{assumption}[Decoding Margin]\label{asm:margin}
For every expert anchor $y_j$ with ground-truth tool $t^*$, there exists $\Delta_{\mathrm{dec}}>0$ such that $\|y_j-e_{t^*}\|_2+\Delta_{\mathrm{dec}}\le\|y_j-e_t\|_2$ for all $t\in T_h\setminus\{t^*\}$.
\end{assumption}

\begin{assumption}[Utility Regularity]\label{asm:utility}
For the finite horizons considered, $U$ is bounded and Lipschitz in both the latent plan and encoded context: $|U(P;c)-U(P';c')|\le L_P\sum_l\|z^{(l)}-z'^{(l)}\|_2+L_c d_c(c,c')$.
\end{assumption}

\begin{assumption}[Feedback Stability]\label{asm:stability}
Let $e_h=d_c(c_h,c_h^*)$ be the encoded-context deviation from an oracle execution and let $a_h\ge0$ collect the new flow, decoding, and observation error introduced at phase $h$. The feedback-conditioned update satisfies $e_{h+1}\le\rho e_h+a_h$ for some local $0\le\rho<1$ on the evaluated trajectory region.
\end{assumption}

\subsection{Propositions and Theorems}

\begin{proposition}[Endpoint Error Bound]\label{prop:endpoint}
Let the teacher field be $K_u$-Lipschitz and let the pointwise velocity error be at most $\delta$. Then $\eta_\phi:=\mathbb E\|z(1)-y(1)\|_2\le C_{\mathrm{flow}}\delta$, where $C_{\mathrm{flow}}=(e^{K_u}-1)/K_u$ (or $1$ when $K_u=0$). For nearest-neighbor/MAP decoding,
\begin{equation}
\Pr(\hat t_{\mathrm{MAP}}\ne t^*)\le
\min\!\left\{1,\frac{2\eta_\phi}{\Delta_{\mathrm{dec}}}\right\}.
\label{eq:endpoint-main}
\end{equation}
\end{proposition}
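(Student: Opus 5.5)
The proof has two parts: a Grönwall-type stability estimate for the flow ODE, and a margin-plus-Markov argument for the decoding error.

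\emph{Part 1 (flow error).} Let $y(s)$ be the teacher path, which solves $\dot y(s)=u^*(y(s),s)$. Let $z(s)$ solve $\dot z(s)=v_\phi(z(s),s\mid c_h)$ from the same initial point. Here we assume the coupling used in flow matching, so $z(0)=y(0)$. Set $g(s)=\|z(s)-y(s)\|_2$. Then
\begin{equation*}
\|\dot z-\dot y\|\le \|v_\phi(z,s)-u^*(z,s)\|+\|u^*(z,s)-u^*(y,s)\|\le \delta+K_u\,g(s).
\end{equation*}
Since $g$ is absolutely continuous, we have $g'(s)\le \|\dot z-\dot y\|$ almost everywhere. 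Grönwall's inequality in integral form, $g(s)\le \int_0^s(\delta+K_u g)$, then gives $g(1)\le \delta\int_0^1 e^{K_u(1-r)}\,dr=\delta(e^{K_u}-1)/K_u$. When $K_u=0$ the bound is simply $\delta$. Taking expectation over the prior and the conditioning yields $\eta_\phi\le C_{\mathrm{flow}}\delta$.

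\emph{Part 2 (decoding error).} We use Assumption~\ref{asm:margin} at the endpoint anchor $y=y(1)$. Suppose $\|z-y\|<\Delta_{\mathrm{dec}}/2$. Then for every $t\ne t^*$, the triangle inequality gives
\begin{equation*}
\|z-e_t\|\ge\|y-e_t\|-\|z-y\|\ge \|y-e_{t^*}\|+\Delta_{\mathrm{dec}}-\|z-y\|>\|y-e_{t^*}\|+\|z-y\|\ge\|z-e_{t^*}\|.
\end{equation*}
So $t^*$ is the strict nearest neighbor of $z$. Because the distribution \eqref{eq2} is monotone in the distance, $t^*$ is also its MAP tool. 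Hence the error event satisfies $\{\hat t_{\mathrm{MAP}}\ne t^*\}\subseteq\{\|z(1)-y(1)\|\ge \Delta_{\mathrm{dec}}/2\}$, including the tie case. Markov's inequality bounds the probability of the latter event by $2\eta_\phi/\Delta_{\mathrm{dec}}$. Since any probability is at most $1$, the minimum with $1$ follows, which gives \eqref{eq:endpoint-main}.

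\emph{Main obstacle.} The delicate step is justifying the Grönwall comparison, because it needs both trajectories to start at the same point. If $z(0)\sim p_0$ and $y(0)$ are drawn independently, the bound acquires an extra term $e^{K_u}\,\mathbb E\|z(0)-y(0)\|$. I would handle this by stating that the teacher path is generated by the same flow from the same prior sample, so the initial error vanishes. The other hypotheses need milder care: "pointwise velocity error $\le\delta$" is read as holding uniformly along the generated path, and the teacher field only needs the Lipschitz constant $K_u$.
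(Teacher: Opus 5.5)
Your proposal is correct and follows the paper's proof essentially step for step: a Gr\"onwall comparison between the learned and teacher trajectories started from a common prior sample, then the triangle-inequality margin argument giving $\{\hat t_{\mathrm{MAP}}\ne t^*\}\subseteq\{\|z(1)-y(1)\|_2\ge\Delta_{\mathrm{dec}}/2\}$, followed by Markov's inequality. The only cosmetic difference is notation: the paper keeps $y(s)$ for the interpolated expert path and introduces a separate teacher trajectory $z^*(s)$, driven by $u^*$ from the same prior sample as $z(s)$ with $z^*(1)=y(1)$ by construction, which is exactly the resolution you propose for your ``main obstacle.''
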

\begin{remark}
If the endpoint error is deterministically below $\Delta_{\mathrm{dec}}/2$, the correct tool remains the unique nearest neighbor; Eq.~\eqref{eq:endpoint-main} covers the expected-error case.
The detailed proof is provided in Appendix \ref{appendix:proof1}.
\end{remark}

\begin{theorem}[Plan Utility Convergence]\label{thm:convergence}
Let $P_h^*$ be the expert plan and let $P_h$ be the generated plan of length $L_h$. Suppose the anchor neighborhood has positive squared-distance gaps $g_l>0$, the maximum one-tool utility change is $B_U$, and $N_h=|T_h|$. If $\eta_h$ bounds the expected integrated velocity error, then
\begin{equation}
\begin{split}
|U(P_h^*;c_h)-\mathbb E U(P_h;\hat c_h)|
\le {}& L_P L_h C_{\mathrm{flow}}\eta_h \\
&+B_U(N_h-1)\sum_{l=1}^{L_h}e^{-g_l/\epsilon}
+L_c\beta_o\mathbb E\|o-\hat o\|_2 .
\end{split}
\label{eq18}
\end{equation}
Consequently, the expected utility approaches the oracle value as the flow and observation errors vanish and the decoder becomes deterministic while retaining positive gaps.
\end{theorem}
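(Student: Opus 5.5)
The plan is to split the total deviation into three parts with the triangle inequality, passing through two intermediate quantities. The first is $U(\bar P_h; c_h)$, where $\bar P_h$ is the plan whose anchors are the flow outputs $z_h^{(l)}$ but whose decoded tools are the expert tools. The second is $\mathbb E\,U(P_h; c_h)$, the generated plan evaluated in the true context. This gives
\begin{equation*}
\begin{split}
|U(P_h^*;c_h)-\mathbb E U(P_h;\hat c_h)|
\le{}& |U(P_h^*;c_h)-\mathbb E U(\bar P_h;c_h)| \\
&+|\mathbb E U(\bar P_h;c_h)-\mathbb E U(P_h;c_h)| \\
&+|\mathbb E U(P_h;c_h)-\mathbb E U(P_h;\hat c_h)|.
\end{split}
\end{equation*}
Each term on the right will be matched to one summand in Eq.~\eqref{eq18}.

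\textbf{Flow term.} Assumption~\ref{asm:utility} gives
\[
|U(P_h^*;c_h)-U(\bar P_h;c_h)|\le L_P\sum_l\|z_h^{(l)}-y_h(s_l)\|_2 .
\]
To bound each anchor error, I would rerun the Grönwall argument behind Proposition~\ref{prop:endpoint} at every intermediate time $s_l\le 1$, not only at $s=1$. The error $w(s)=z(s)-y(s)$ satisfies
\[
\|w(s)\|\le \int_0^s\big(K_u\|w\|+\|v_\phi-u^*\|\big)\,dr .
\]
This yields $\mathbb E\|w(s_l)\|\le C_{\mathrm{flow}}\eta_h$ uniformly in $l$, because the Grönwall factor is monotone in $s$ and the integrated velocity error up to $s_l$ is at most $\eta_h$. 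Summing the $L_h$ anchors gives $L_PL_hC_{\mathrm{flow}}\eta_h$.

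\textbf{Decoding term.} The generated plan differs from $\bar P_h$ only at positions where the sampled tool differs from the expert tool. Each such substitution changes $U$ by at most $B_U$, so the term is bounded by $B_U\sum_l\Pr(\hat t_h^{(l)}\ne t^{*}_l)$. Under Eq.~\eqref{eq2}, with the gap hypothesis $\|z^{(l)}-e_t\|^2-\|z^{(l)}-e_{t^*}\|^2\ge g_l$ for every $t\ne t^*$, each wrong tool has probability ratio at most $e^{-g_l/\epsilon}$ relative to $t^*$. Hence
\[
\Pr(\hat t\ne t^*)=\frac{\sum_{t\ne t^*}p_t}{p_{t^*}+\sum_{t\neq t^*}p_t}\le (N_h-1)e^{-g_l/\epsilon}.
\]
Summing over $l$ gives the middle summand.

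\textbf{Context term.} Lipschitzness in the context (Assumption~\ref{asm:utility}) gives the bound $L_c\,\mathbb E\,d_c(c_h,\hat c_h)$. I would then assume, or read off from the structure of the update operator $\mathcal U$ in Eq.~\eqref{eq4}, that the encoded context depends on the observation through an encoder that is $\beta_o$-Lipschitz. This gives $d_c\le\beta_o\|o-\hat o\|_2$ and hence the last summand. The consequence stated in the theorem then follows by letting $\eta_h\to0$, $\mathbb E\|o-\hat o\|\to0$ and $\epsilon\to0$ with $g_l>0$ held fixed.

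\textbf{Main obstacle.} The hardest step is the interplay between the decoding term and the flow term. The gaps $g_l$ are defined at the generated anchors, which are random, while the margin of Assumption~\ref{asm:margin} holds at the expert anchors. I would first treat the "anchor neighborhood has positive gaps" hypothesis as holding almost surely, so that no conditioning is needed. Failing that, I would define a good event $\{\|z^{(l)}-y^{(l)}\|<\Delta_{\mathrm{dec}}/2\}$ and charge its complement to the flow term using Markov's inequality, as in Proposition~\ref{prop:endpoint}. That fallback would add only a constant multiple of $\eta_h$, but it would change the constant in front of the flow term, so I will keep the first interpretation to obtain Eq.~\eqref{eq18} exactly.
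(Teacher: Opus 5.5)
Your decomposition, and the bound you attach to each piece, are the paper's own proof. The three pieces are: Gr\"onwall at every anchor time $s_l$ plus the plan-Lipschitz part of Assumption~\ref{asm:utility}; the softmax tail $(N_h-1)e^{-g_l/\epsilon}$ with bounded differences and a union bound over positions; and the context-Lipschitz part plus a $\beta_o$-Lipschitz observation channel of $\mathcal U$, which the paper also simply posits as an extra hypothesis. Your way around the random-anchor issue (require the gap on the whole generated-anchor neighborhood) is exactly how the paper defines $g_l$. Your $\bar P_h$ only makes explicit the split between latent-anchor error and decoding error that the paper leaves implicit. The one step that does not go through as you justify it is the flow term, and it fails in the same way in the paper's Step~1.

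\textbf{The flow-term constant.} If only the integrated error $E(s)=\int_0^s\|v_\phi-u^*\|_2\,dr$ is controlled, Gr\"onwall applied to $\|w(s)\|_2\le E(s)+K_u\int_0^s\|w(r)\|_2\,dr$ (with $E$ nondecreasing) gives $\|w(s_l)\|_2\le e^{K_u s_l}E(s_l)$. That factor is as large as $e^{K_u}$, not $C_{\mathrm{flow}}=\int_0^1 e^{K_u(1-r)}\,dr$. The smaller factor is what a pointwise bound $\delta$ gives, as in Proposition~\ref{prop:endpoint}. No sharper argument avoids this. Take $L_h=1$ (so $s_1=1$), a constant one-dimensional target $y\equiv\tilde y_1$, $u^*(z,s)=K_u(z-\tilde y_1)$, $z(0)=\tilde y_1$, and $v_\phi=u^*+\phi(s)$ with $\phi\ge 0$ of total mass $\eta$ concentrated near $s=0$. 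Then $\|w(1)\|_2\to e^{K_u}\eta>C_{\mathrm{flow}}\eta$ for every $K_u>0$. With $U=-L_P\|z^{(1)}-\tilde y_1\|_2$ (so $B_U=0$) and no observation error, Eq.~\eqref{eq18} itself fails. You must therefore either read $\eta_h$ as a uniform-in-$s$ bound on the velocity error along the path, or replace $C_{\mathrm{flow}}$ by $e^{K_u}$ in the first term (the two coincide when $K_u=0$). The convergence conclusion survives either way.

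\textbf{The initial condition.} Writing your integral inequality with no initial term presupposes $z_h(0)=y_h(0)$ and that $y_h$ is itself an integral curve of $u^*$. Neither follows from the setup. For $L_h>1$ one has $s_1=0$, so the first anchor $z_h^{(1)}=z_h(0)$ is the prior sample, and no velocity error controls its distance to $\tilde y_1$. These must be added as hypotheses, or you must carry an extra term proportional to $\mathbb E\|z_h(0)-y_h(0)\|_2$.
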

\begin{remark}
The three terms isolate continuous path approximation, finite-temperature decoding, and feedback perturbation; the theorem does not assert convergence when a margin collapses or the stability assumptions fail.
The detailed proof is provided in Appendix \ref{appendix:theorem1}.
\end{remark}

\begin{theorem}[Unseen Tool Generalization]\label{thm:generalization}
For every unseen tool, let $\pi(t)$ be its nearest training proxy. Under Assumption~\ref{asm:smoothness},
\begin{equation}
    \mathbb{E}[\operatorname{Err}_{\mathrm{unseen}}]
    \le \mathbb{E}[\operatorname{Err}_{\mathrm{proxy}}]
    + C_{\mathcal M}(\delta_{\mathrm{shift}}+\epsilon_{\mathrm{cover}}),
    \label{eq19}
\end{equation}
where $\delta_{\mathrm{shift}}$ is the distance from the unseen region to the training manifold region and $\epsilon_{\mathrm{cover}}$ is the covering radius of the training embeddings.
\end{theorem}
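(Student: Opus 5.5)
The proof is a two-hop Lipschitz argument on the tool manifold, routed through the nearest training proxy $\pi(t)$ and then averaged. Let $\mathcal R_{\mathrm{tr}}\subset\mathcal M$ denote the compact region spanned by the training tool embeddings. For an unseen tool $t$, let $\operatorname{Err}(t)$ be the excess planning risk incurred when the planner must use $e_t$. The quantity $\operatorname{Err}_{\mathrm{unseen}}$ is the average of $\operatorname{Err}(t)$ over unseen tools, and $\operatorname{Err}_{\mathrm{proxy}}$ is the same average with $e_t$ replaced by $e_{\pi(t)}$. Assumption~\ref{asm:smoothness} is read literally: on the compact region containing both train and test embeddings, replacing one tool embedding by another changes the excess risk by at most $C_{\mathcal M}$ times their Euclidean distance.

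The key steps, in order, are as follows.

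\emph{Step 1 (pointwise reduction).} By Assumption~\ref{asm:smoothness}, for each unseen $t$,
\begin{equation*}
\operatorname{Err}(t)\le \operatorname{Err}(\pi(t)) + C_{\mathcal M}\,\|e_t-e_{\pi(t)}\|_2 .
\end{equation*}

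\emph{Step 2 (triangle inequality).} Let $p(t)\in\mathcal R_{\mathrm{tr}}$ be a closest point of the training region to $e_t$; it exists because the region is compact. Then $\|e_t-p(t)\|_2\le\delta_{\mathrm{shift}}$ by the definition of $\delta_{\mathrm{shift}}$. Since $\epsilon_{\mathrm{cover}}$ is the covering radius of the training embeddings over $\mathcal R_{\mathrm{tr}}$, some training embedding lies within $\epsilon_{\mathrm{cover}}$ of $p(t)$. Because $\pi(t)$ is the \emph{nearest} training tool to $e_t$, we get
\begin{equation*}
\|e_t-e_{\pi(t)}\|_2\le \|e_t-p(t)\|_2+\epsilon_{\mathrm{cover}}\le\delta_{\mathrm{shift}}+\epsilon_{\mathrm{cover}} .
\end{equation*}

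\emph{Step 3 (averaging).} Take expectation over the unseen tools, and over any query or context randomness, on both sides of Step 1. Linearity and monotonicity of expectation then give Eq.~\eqref{eq19}.

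The main obstacle is definitional rather than technical: $\delta_{\mathrm{shift}}$ and $\epsilon_{\mathrm{cover}}$ have to be quantified so that the bound in Step 2 holds for every unseen tool. I will take $\delta_{\mathrm{shift}}:=\sup_t \operatorname{dist}(e_t,\mathcal R_{\mathrm{tr}})$, which gives a uniform bound. Alternatively, it can be an expectation, in which case the bound holds in mean after Step 3; either choice works because the final inequality is linear.

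A second point needs checking. The Lipschitz property of Assumption~\ref{asm:smoothness} is stated for embeddings in the compact region, so every path used must stay there. Step 1 applies it only at the pair $(e_t,e_{\pi(t)})$. It would additionally require the point $p(t)$ to lie in that region only if the risk were being compared at $p(t)$, and Step 2 uses $p(t)$ solely in a distance bound. Both $e_t$ and $e_{\pi(t)}$ lie in the region by hypothesis. I will also note that the Euclidean (not geodesic) distance is what the assumption uses, so no manifold curvature constants enter.
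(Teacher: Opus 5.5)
Your proposal is correct and follows essentially the same route as the paper. Both arguments apply the Lipschitz bound from Assumption~\ref{asm:smoothness} to the pair $(e_t,e_{\pi(t)})$, use a triangle inequality through a point of the training region to get $\|e_t-e_{\pi(t)}\|_2\le\delta_{\mathrm{shift}}+\epsilon_{\mathrm{cover}}$, and then take expectations. The only nuance is that the paper defines $\pi(t_u)$ as the training tool that covers the intermediate point $\bar e$, whereas you take $\pi(t)$ to be the genuinely nearest training tool and use its minimality, which matches the theorem's statement slightly more faithfully.
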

\begin{remark}
The bound separates the proxy's existing planning error from the additional penalty caused by semantic shift and incomplete training coverage.
The detailed proof is in Appendix \ref{appendix:theorem2}.
\end{remark}

\begin{proposition}[Closed-loop Error Contraction]\label{prop:closed_loop}
Under Assumption~\ref{asm:stability}, the cumulative closed-loop context error obeys
\begin{equation}
E_{\mathrm{closed}}\le
\sum_{i=1}^{H-1}a_i\frac{1-\rho^{H-i}}{1-\rho}
\le
\sum_{i=1}^{H-1}(H-i)a_i=:E_{\mathrm{open}}.
\label{eq:closed-main}
\end{equation}
The second inequality is strict whenever some $a_i>0$ has at least two subsequent phases and $\rho<1$.
\end{proposition}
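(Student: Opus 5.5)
We unroll the recursion in Assumption~\ref{asm:stability} and then compare the resulting geometric weights with the linear weights that define $E_{\mathrm{open}}$. First we fix the meaning of $E_{\mathrm{closed}}$: it is the cumulative context deviation $\sum_{h=1}^{H} e_h$ along the evaluated trajectory. We take the initial deviation to be $e_1=0$, since at phase $1$ the context $c_1=(x,\emptyset,T_1)$ coincides with the oracle context. If $e_1\neq 0$ we would simply carry an extra term $e_1(1-\rho^H)/(1-\rho)$; under the stated bound we assume it vanishes.

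\textbf{Step 1 (unrolling).} By induction on $h$, using $e_{h+1}\le\rho e_h+a_h$ with $\rho\ge 0$ so that inequalities are preserved under multiplication by $\rho$, we obtain
\begin{equation*}
e_h\le\sum_{i=1}^{h-1}\rho^{\,h-1-i}a_i,\qquad h=1,\dots,H.
\end{equation*}

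\textbf{Step 2 (summation).} Summing over $h$ and exchanging the order of summation, each $a_i$ collects the weights $\sum_{h=i+1}^{H}\rho^{h-1-i}=\sum_{k=0}^{H-i-1}\rho^k$. This equals $(1-\rho^{H-i})/(1-\rho)$ because $\rho<1$. This gives the first inequality in Eq.~\eqref{eq:closed-main}.

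\textbf{Step 3 (comparison with open loop).} For each $i$, the weight $\sum_{k=0}^{H-i-1}\rho^k$ has $H-i$ terms, each $\rho^k\le 1$. Hence it is at most $H-i$, and since $a_i\ge0$ the termwise bound sums to $E_{\mathrm{open}}$. We interpret $E_{\mathrm{open}}$ as the $\rho=1$ (no feedback absorption) accumulation. No separate justification is needed, because the statement defines it by $=:$.

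\textbf{Step 4 (strictness).} This is the only mildly delicate point. The geometric sum equals $H-i$ exactly when $H-i\le1$ (only the $k=0$ term) or when $\rho=1$. If $H-i\ge2$, the sum contains the term $\rho^1<1$. Note this requires $\rho<1$ but not $\rho>0$: when $\rho=0$ the weight is $1<H-i$. So the weight for that $i$ is strictly below $H-i$, and multiplying by $a_i>0$ makes that summand strictly smaller. The remaining summands satisfy non-strict inequalities, so the total inequality is strict.

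The main obstacle is the bookkeeping in Step 1–2, namely the index range $i\le H-1$ and the handling of $e_1$. The hypothesis "at least two subsequent phases" translates exactly to $H-i\ge2$.
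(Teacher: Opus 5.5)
Your proposal is correct and follows essentially the same route as the paper's proof: set $e_1=0$, unroll the recursion, exchange the order of summation to obtain the geometric weights, bound each weight by its number of terms, and get strictness from the $\rho^1<1$ term whenever $H-i\ge 2$. Your $E_{\mathrm{closed}}=\sum_{h=1}^{H}e_h$ coincides with the paper's $\sum_{h=2}^{H}e_h$ because $e_1=0$.
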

\begin{remark}
Here $E_{\mathrm{open}}$ is the no-correction accumulation envelope for the same local errors. The result characterizes replanning only inside the assumed contractive region; it does not cover non-contractive or adversarial updates.
The detailed proof is provided in Appendix \ref{appendix:proof2}.
\end{remark}

\section{Experiments}
\subsection{Benchmark and Experimental Setup}
\label{sec:setup}

\noindent\textbf{Plan-level Supervision and Evaluation Protocol.}
We build on four public sources: \textit{$\tau^2$-bench} \cite{tau}, \textit{ToolBench} \cite{tool_bench}, \textit{API-Bank} \cite{api_bank}, and \textit{GTA} \cite{gta}. 
Our contribution is not a new closed-loop environment: $\tau^2$-bench supplies the executable environment. 
Instead, we standardize schemas and traces and create a plan-level protocol in which every observed expert prefix is paired with its complete remaining trajectory. 
The unified data contain 6,865 tasks, 3,930 tool schemas, and 599 test-only tools. 
For stateful executable evaluation, each prediction is replayed against real tools, policies, and databases under strict forward-only execution; the submitted split contains 140 trajectories and the expanded study uses up to 800.
More detailed descriptions of the dataset construction and execution configurations are provided in Appendix \ref{sec:appendix_benchmark}.

\newcommand{\rotmodel}[1]{%
  \rotatebox[origin=c]{90}{\makecell[c]{#1}}%
}
\begin{table*}[t]
\centering
\caption{Main results on our proposed plan-level benchmark. All models are evaluated under a strict forward-execution regime. Highlighted are the results ranked \best{first}, \second{second}, and \third{third}.}
\label{tab:main_results}
\setlength{\tabcolsep}{8pt}
\renewcommand{\arraystretch}{1.1}

\resizebox{\textwidth}{!}{
\begin{tabular}{clcccccc}
\toprule
\multirow{3}{*}{\textbf{Executor}} & \multirow{3}{*}{\textbf{Methods}} &
\multicolumn{2}{c}{\textbf{Execution Success}} & 
\multicolumn{2}{c}{\textbf{Planning Accuracy}} &
\multicolumn{2}{c}{\textbf{Execution \& Grounding}} \\
\cmidrule(lr){3-4} \cmidrule(lr){5-6} \cmidrule(lr){7-8}
& & \textbf{Overall} & \textbf{Retail} & \textbf{Tool} & \textbf{Retail} & \textbf{Pred} & \textbf{Arg Value} \\
& & \textbf{Success $\uparrow$} & \textbf{Success $\uparrow$} & \textbf{EM $\uparrow$} & \textbf{Tool EM $\uparrow$} & \textbf{Exec $\uparrow$} & \textbf{EM $\uparrow$} \\
\midrule

\multirow{5}{*}{\rotmodel{\textbf{Qwen2.5-7B}\\\textbf{-Instruct}}}
& Direct-LLM               & \ms{0.1214}{0.0186} & \ms{0.2208}{0.0215} & \ms{0.1286}{0.0179} & \ms{0.2338}{0.0231} & \ms{0.9412}{0.0068} & \ms{0.4308}{0.0197} \\
& Standard SFT             & \second{\ms{0.6571}{0.0126}} & \ms{0.3766}{0.0174} & \second{\ms{0.6714}{0.0118}} & \ms{0.4026}{0.0192} & \ms{0.9268}{0.0075} & \third{\ms{0.4387}{0.0156}} \\
& RAG Planning             & \third{\ms{0.5214}{0.0158}} & \third{\ms{0.4935}{0.0189}} & \third{\ms{0.5214}{0.0163}} & \third{\ms{0.4935}{0.0195}} & \best{\ms{1.0000}{0.0000}} & \ms{0.3478}{0.0172} \\
& ToolRL    & \ms{0.4786}{0.0289} & \second{\ms{0.5065}{0.0316}} & \ms{0.4929}{0.0274} & \second{\ms{0.5325}{0.0331}} & \second{\ms{0.9643}{0.0098}} & \second{\ms{0.5257}{0.0295}} \\
& \textbf{FlowAgent (Ours)} & \best{\ms{0.7643}{0.0098}} & \best{\ms{0.5714}{0.0137}} & \best{\ms{0.7745}{0.0091}} & \best{\ms{0.5892}{0.0152}} & \third{\ms{0.9619}{0.0061}} & \best{\ms{0.5929}{0.0124}} \\
\midrule

\multirow{5}{*}{\rotmodel{\textbf{Llama-3.2-3B}\\\textbf{-Instruct}}}
& Direct-LLM               & \ms{0.2000}{0.0207} & \ms{0.0779}{0.0169} & \ms{0.2071}{0.0198} & \ms{0.0909}{0.0182} & \ms{0.8060}{0.0116} & \ms{0.1225}{0.0175} \\
& Standard SFT             & \second{\ms{0.6214}{0.0152}} & \third{\ms{0.3117}{0.0201}} & \second{\ms{0.6286}{0.0144}} & \third{\ms{0.3247}{0.0213}} & \third{\ms{0.9318}{0.0084}} & \second{\ms{0.4348}{0.0187}} \\
& RAG Planning             & \third{\ms{0.5286}{0.0176}} & \second{\ms{0.5065}{0.0198}} & \third{\ms{0.5500}{0.0169}} & \second{\ms{0.5455}{0.0211}} & \ms{0.9091}{0.0102} & \third{\ms{0.3241}{0.0181}} \\
& ToolRL     & \ms{0.1357}{0.0361} & \ms{0.1169}{0.0328} & \ms{0.1500}{0.0348} & \ms{0.1429}{0.0355} & \second{\ms{0.9362}{0.0124}} & \ms{0.1028}{0.0307} \\
& \textbf{FlowAgent (Ours)} & \best{\ms{0.7571}{0.0112}} & \best{\ms{0.5584}{0.0159}} & \best{\ms{0.7714}{0.0103}} & \best{\ms{0.5844}{0.0167}} & \best{\ms{0.9524}{0.0072}} & \best{\ms{0.5573}{0.0136}} \\
\bottomrule
\end{tabular}
}
\end{table*}

\begin{table*}[h]
\centering
\caption{Ablation studies of proposed components on the Qwen2.5-7B executor. Additional ablation results can be found in Appendix \ref{sec:additional_exp}.}
\label{tab:ablation}
\setlength{\tabcolsep}{6pt}
\renewcommand{\arraystretch}{1.1}
\resizebox{\textwidth}{!}{
\begin{tabular}{ccc cccccc}
\toprule
\textbf{Flow Planner} & \textbf{Con. State} & \textbf{Stop Mech.} & \textbf{Overall Suc.} & \textbf{Retail Suc.} & \textbf{Tool EM} & \textbf{Retail Tool EM} & \textbf{Pred Exec} & \textbf{Arg Value EM} \\
\midrule
$\times$ & $\times$ & $\times$ & 0.6571 & 0.3766 & 0.6714 & 0.4026 & 0.9268 & 0.4387 \\
\checkmark & $\times$ & $\times$ & 0.6786 & 0.4156 & 0.6857 & 0.4286 & 0.9381 & 0.5099 \\
\checkmark & \checkmark & $\times$ & 0.7357 & 0.5195 & 0.7429 & 0.5325 & \cellcolor{bestblue}\textbf{\best{0.9619}} & 0.5613 \\
\checkmark & \checkmark & \checkmark & 
\cellcolor{bestblue}\textbf{\best{0.7643}} & 
\cellcolor{bestblue}\textbf{\best{0.5714}} & 
\cellcolor{bestblue}\textbf{\best{0.7745}} & 
\cellcolor{bestblue}\textbf{\best{0.5892}} & 
\cellcolor{bestblue}\textbf{\best{0.9619}} & 
\cellcolor{bestblue}\textbf{\best{0.5929}} \\
\bottomrule
\end{tabular}
}
\end{table*}

\noindent\textbf{Baselines.}
We compare FlowAgent with Direct-LLM, Standard SFT, RAG Planning, and ToolRL \cite{ToolRL} in the main table. 
Expanded comparisons add ToolPlanner \cite{Toolplanner}, Agent-R1 \cite{RL2}, and AutoTIR \cite{Autotir} under matched executors and budgets; implementation details and full results appear in Appendix \ref{sec:appendix_baselines} and Appendix \ref{sec:additional_exp}.

\noindent\textbf{Metrics.}
To holistically evaluate the global planning capabilities and execution robustness, we employ a multi-dimensional metric suite, including Overall Success, Retail Success, Tool EM, Retail Tool EM, Pred Exec, and Arg Value EM. 
The detailed formulations for all evaluation metrics are provided in Appendix \ref{sec:appendix_metrics}.

\subsection{Experimental Results}
\noindent\textbf{Main Results.} 
As shown in Table \ref{tab:main_results}, FlowAgent consistently outperforms all baselines across both executors. 
Compared to the Standard SFT baseline lacking the continuous flow matching prior, our method improves the overall success rate on Qwen2.5-7B from 0.6571 to 0.7643. 
This performance gap indicates that our generative prior offers better state conditioning than simple retrieval hints or discrete policy optimization. 
Finally, the consistent performance gains observed during the Llama-3.2-3B evaluation validate FlowAgent as a generalized planning framework capable of bridging abstract reasoning and structured action execution in long-horizon reasoning tasks.
\begin{wrapfigure}{r}{0.29\textwidth}
  \vspace{4pt}
  \centering
  \captionsetup{skip=2.5pt}
  \includegraphics[width=1\linewidth]{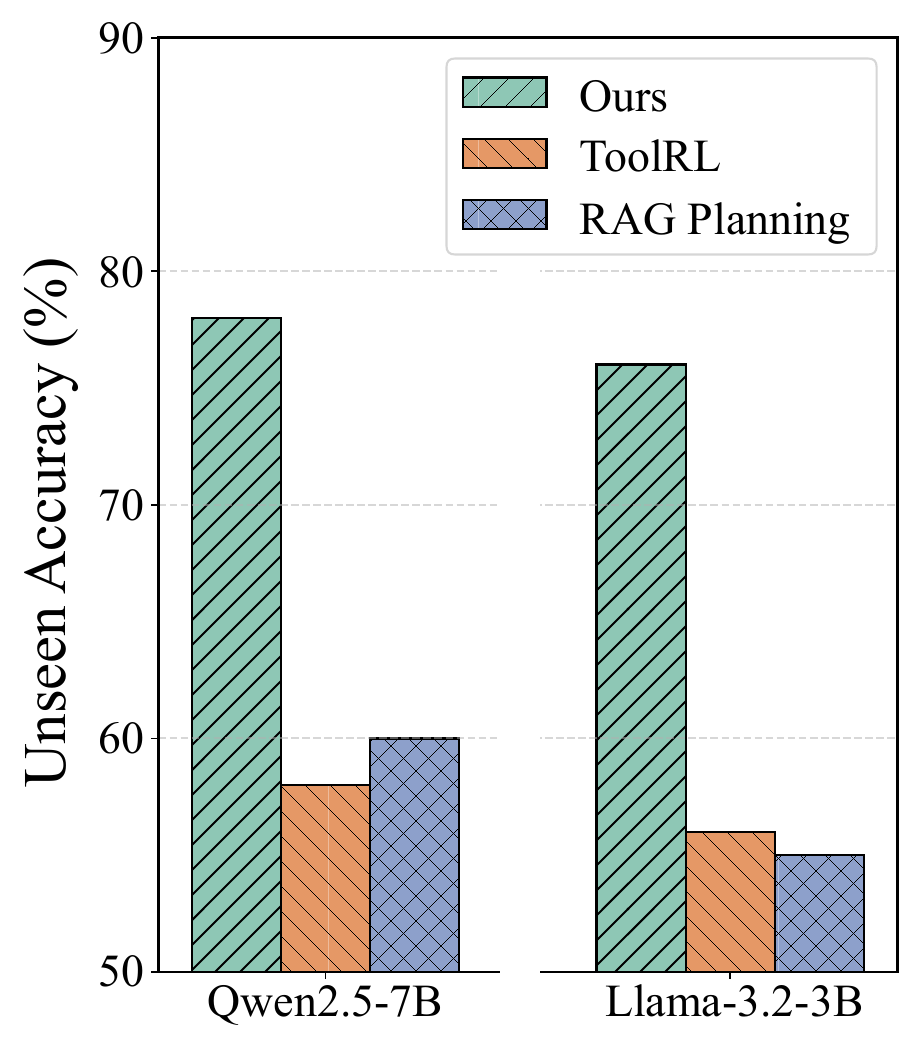}
  \caption{Generalization performance on unseen tools. 
  }
  \label{fig:unseen_generalization}
\end{wrapfigure}

\noindent\textbf{Ablation Studies.}
To evaluate the contribution of each component, we conduct the ablation study shown in Table~\ref{tab:ablation}. 
Starting from Standard SFT, introducing the Conditional Flow Planner increases Overall Success from 0.6571 to 0.6786 and improves Arg Value EM from 0.4387 to 0.5099. 
Adding the observation-grounded Context State produces the largest improvement and raises Overall Success to 0.7357. 
Finally, the Parameterized Stop Mechanism further improves Retail Success from 0.5195 to 0.5714 and Retail Tool EM from 0.5325 to 0.5892. 
These results demonstrate that each component provide complementary benefits for long-horizon tool orchestration.

\noindent\textbf{Generalization Results.} 
\label{sec:generalization}
To evaluate generalization to novel tool schemas, we measure planning accuracy on a strictly held-out set of unseen tools, as shown in Figure~\ref{fig:unseen_generalization}. 
FlowAgent achieves a planning accuracy of 78\% on the Qwen2.5-7B executor and 76\% on the Llama-3.2-3B executor. 
FlowAgent's consistent performance indicates that continuous semantic trajectories capture tool relationships that remain useful when the available toolset changes.

\noindent\textbf{RL Baseline Results.}
To determine whether the improvement extends beyond ToolRL, we compare FlowAgent with ToolPlanner, Agent-R1, and AutoTIR, as shown in Figure~\ref{fig:rl_breadth}. 
FlowAgent nevertheless improves Overall Success over Agent-R1 from 0.7186 to 0.7643 on Qwen2.5-7B and from 0.6929 to 0.7571 on Llama-3.2-3B. 
These results confirm that the improvement is consistent across recent RL planner families and is not explained by executability alone.

\noindent\textbf{Larger-Executor Results.}
To evaluate whether FlowAgent remains effective as executor capacity increases, we repeat the comparison on four larger instruction-tuned models, as shown in Figure~\ref{fig:scale_large}. 
ToolRL becomes stronger on larger executors, particularly on the retail subset, but remains below FlowAgent in task success. 
The consistent improvements within both model families demonstrate that complete-trajectory supervision remains beneficial as the underlying executor scales.

\begin{minipage}[t]{0.64\textwidth}
  \vspace{0pt}\centering
  \begin{minipage}[t]{0.49\linewidth}
    \centering\includegraphics[width=\linewidth]{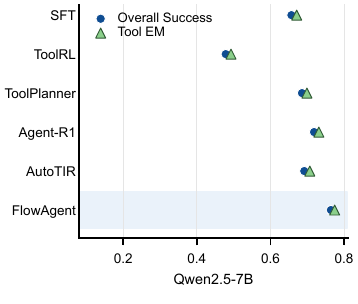}
  \end{minipage}\hfill
  \begin{minipage}[t]{0.49\linewidth}
    \centering\includegraphics[width=\linewidth]{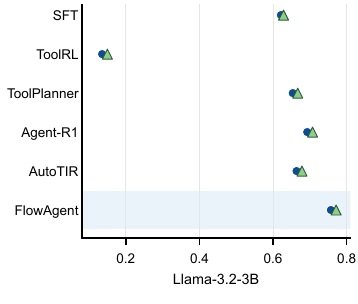}
  \end{minipage}
  \captionsetup{font=small,skip=4pt,hypcap=false}
  \captionof{figure}{Comparison with recent RL tool-use methods.}
  \label{fig:rl_breadth}
\end{minipage}\hfill
\begin{minipage}[t]{0.33\textwidth}
  \vspace{0pt}\centering
  \includegraphics[width=\linewidth]{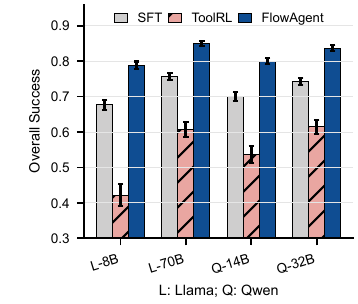}
  \captionsetup{font=small,skip=4pt,hypcap=false}
  \captionof{figure}{Larger executors.}
  \label{fig:scale_large}
\end{minipage}

\begin{minipage}[t]{0.315\textwidth}
  \vspace{0pt}\centering
  \includegraphics[width=\linewidth]{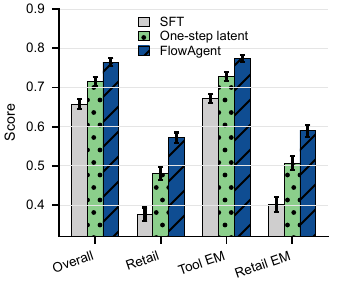}
  \captionsetup{font=small,skip=4pt,hypcap=false}
  \captionof{figure}{Prediction target.}
  \label{fig:prediction_target}
\end{minipage}\hfill
\begin{minipage}[t]{0.315\textwidth}
  \vspace{0pt}\centering
  \includegraphics[width=\linewidth]{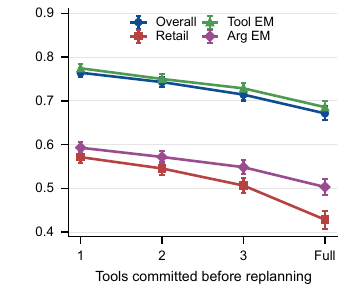}
  \captionsetup{font=small,skip=4pt,hypcap=false}
  \captionof{figure}{Replanning frequency.}
  \label{fig:replanning_horizon}
\end{minipage}\hfill
\begin{minipage}[t]{0.315\textwidth}
  \vspace{0pt}\centering
  \includegraphics[width=\linewidth]{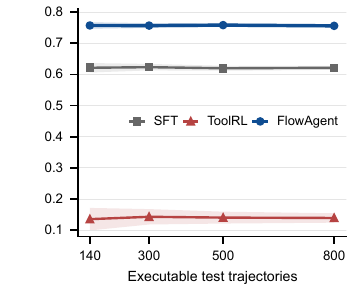}
  \captionsetup{font=small,skip=4pt,hypcap=false}
  \captionof{figure}{Test-set sizes.}
  \label{fig:test_stability}
\end{minipage}

\subsection{Additional Analysis and Discussion.}
\noindent\textbf{Prediction-Target Analysis.}
To distinguish complete-trajectory generation from a stronger one-step semantic representation, we compare three prediction targets in Figure~\ref{fig:prediction_target}. 
One-step latent regression improves Overall Success from 0.6571 to 0.7149 and Retail Success from 0.3766 to 0.4805 over discrete SFT, confirming that semantic regression itself is useful. 
FlowAgent further improves these metrics to 0.7643 and 0.5714 when the entire future latent trajectory is supervised.

\noindent\textbf{Replanning Analysis.}
Figure~\ref{fig:replanning_horizon} evaluates how many predicted tools should be executed before incorporating new feedback. 
Retail Success exhibits a larger decline from 0.5714 to 0.4286 because database-changing observations can invalidate later actions in the provisional trajectory. 
This result demonstrates that complete-trajectory generation supplies global planning structure, while frequent feedback-conditioned replanning remains necessary for reliable execution.

\noindent\textbf{Test-Scale Analysis.}
To examine whether the submitted 140-trajectory test set produces a favorable estimate, we construct independent test sets containing 300, 500, and 800 executable trajectories. 
As shown in Figure~\ref{fig:test_stability}, FlowAgent's Overall Success remains between 0.7559 and 0.7582 across all three test sizes, and Tool EM varies by at most 0.0033. 
These results show that the original performance advantage is stable under substantially expanded executable evaluation.

\noindent\textbf{Efficiency Analysis.}
To evaluate computational efficiency, we compare the training overhead of FlowAgent with the reinforcement-learning baseline ToolRL. 
As reported in Table~\ref{tab:efficiency}, FlowAgent completes optimization in 12.29 minutes, whereas ToolRL requires 48.75 minutes. 
The peak memory on the primary GPU is reduced from 78.69 to 52.63 GiB, and the average peak memory across all three GPUs decreases from 68.36 to 55.14 GiB. 
\begin{wraptable}{r}{0.8\textwidth}  
\vspace{-4pt}
\centering
\caption{Computational efficiency comparison. We report the total training time and the peak memory consumption across three distinct GPUs.}
\label{tab:efficiency}
\resizebox{1\linewidth}{!}{
\begin{tabular}{l ccccc}
\toprule
\textbf{Method} & \textbf{Training Time $\downarrow$} & \textbf{GPU 1 Peak $\downarrow$} & \textbf{GPU 2 Peak $\downarrow$} & \textbf{GPU 3 Peak $\downarrow$} & \textbf{Avg Peak $\downarrow$} \\
\midrule
ToolRL & 48.75 min & 78.69G & 63.22G & 63.16G & 68.36G \\
\textbf{FlowAgent} & 
\cellcolor{bestblue}\textbf{\best{12.29 min}}  & \cellcolor{bestblue}\textbf{\best{52.63G}}  & \cellcolor{bestblue}\textbf{\best{59.91G}} & \cellcolor{bestblue}\textbf{\best{52.88G}} & \cellcolor{bestblue}\textbf{\best{55.14G}} \\
\bottomrule
\end{tabular}
}
\end{wraptable}
This improvement results from learning the conditional velocity field through direct trajectory regression, which avoids the repeated environment rollouts and policy-update overhead required by discrete RL optimization. 
Therefore, FlowAgent improves planning performance while substantially reducing training time and memory consumption.

\noindent
\begin{minipage}[t]{0.64\textwidth}
  \vspace{0pt}\centering
  \begin{minipage}[t]{0.49\linewidth}
    \centering\includegraphics[width=\linewidth]{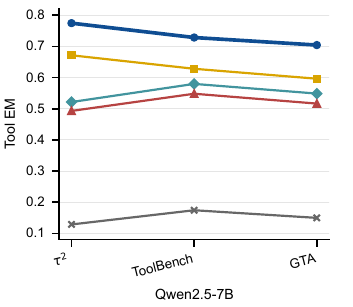}
  \end{minipage}\hfill
  \begin{minipage}[t]{0.49\linewidth}
    \centering\includegraphics[width=\linewidth]{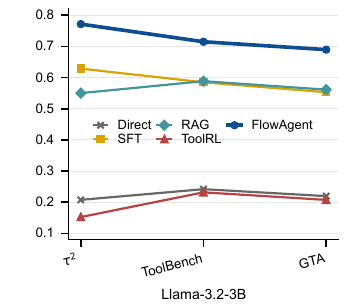}
  \end{minipage}
  \captionsetup{font=small,skip=4pt,hypcap=false}
  \captionof{figure}{Tool EM across task sources.}
  \label{fig:cross_source}
\end{minipage}\hfill
\begin{minipage}[t]{0.33\textwidth}
  \vspace{0pt}\centering
  \includegraphics[width=\linewidth]{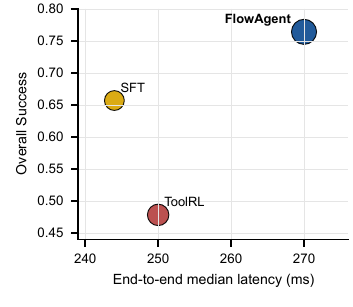}
  \captionsetup{font=small,skip=4pt,hypcap=false}
  \captionof{figure}{Inference latency versus.}
  \label{fig:deployment_cost}
\end{minipage}

\begin{minipage}[t]{0.475\textwidth}
\vspace{0pt}\centering
\captionsetup{font=small,skip=3pt,hypcap=false}
\captionof{table}{Temperature sensitivity.}
\label{Parameter}
\resizebox{\linewidth}{!}{
\begin{tabular}{cccc}
\toprule
\textbf{Temperature} & \textbf{Overall Suc. $\uparrow$} & \textbf{Retail Suc. $\uparrow$} & \textbf{Tool EM $\uparrow$} \\
\midrule
$\epsilon=0.01$ & 0.6957 & 0.5165 & 0.7029 \\
$\epsilon=0.05$ & 0.7351 & 0.5452 & 0.7504 \\
$\epsilon=0.10$ & \cellcolor{bestblue}\textbf{\best{0.7643}} & \cellcolor{bestblue}\textbf{\best{0.5714}} & \cellcolor{bestblue}\textbf{\best{0.7745}} \\
$\epsilon=0.20$ & 0.7214 & 0.5329 & 0.7286 \\
$\epsilon=0.50$ & 0.6729 & 0.4975 & 0.6971 \\
$\epsilon=0.60$ & 0.6540 & 0.4826 & 0.6815 \\
\bottomrule
\end{tabular}}
\end{minipage}\hfill
\begin{minipage}[t]{0.505\textwidth}
\vspace{0pt}\centering
\captionsetup{font=small,skip=3pt,hypcap=false}
\captionof{table}{Retail decision phases.}
\label{tab:retail_analysis}
\resizebox{\linewidth}{!}{
\begin{tabular}{llcc}
\toprule
\textbf{Method} & \textbf{Decision Case} & \textbf{Success $\uparrow$} & \textbf{Arg Value EM $\uparrow$} \\
\midrule
\multirow{3}{*}{FlowAgent}
& Initial Lookup & 0.7000 & 0.6833 \\
& Detail Lookup & 0.7692 & 0.6923 \\
& Database Operation & \cellcolor{bestblue}\textbf{\best{0.3529}} & \cellcolor{bestblue}\textbf{\best{0.4202}} \\
\midrule
\multirow{3}{*}{- w/o Stop Mech.}
& Initial Lookup & 0.7000 & 0.6833 \\
& Detail Lookup & 0.7692 & 0.6923 \\
& Database Operation & 0.1176 & 0.2878 \\
\midrule
Oracle Selection & Database Operation & 0.7059 & 0.6190 \\
\bottomrule
\end{tabular}}
\end{minipage}

\begin{wrapfigure}{r}{0.30\textwidth}
  \vspace{-10pt}
  \centering
  \captionsetup{font=small,skip=2.5pt}
  \includegraphics[width=\linewidth]{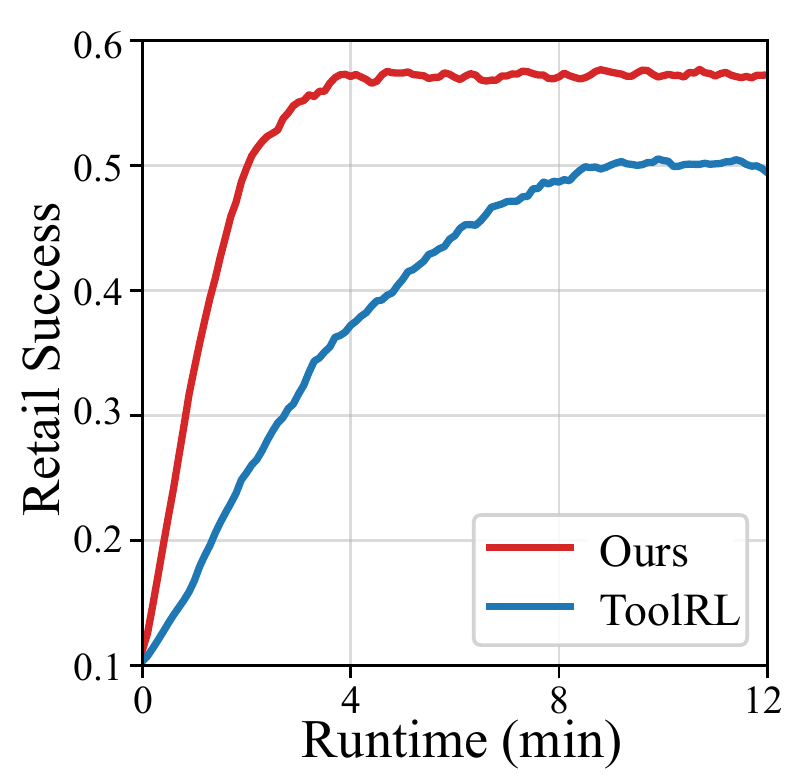}
  \caption{Convergence speed.}
  \label{convergence}
\end{wrapfigure}

\noindent\textbf{Cross-Source Analysis.}
To test whether the improvement is confined to a single benchmark source, we regroup the generated predictions by $\tau^2$-bench, ToolBench, and GTA. 
Figure~\ref{fig:cross_source} shows that FlowAgent obtains the highest Tool EM on every source for both executors. 
The results demonstrate that FlowAgent's advantage is not produced by one benchmark's tool distribution.

\noindent\textbf{Inference-Cost Analysis.}
Figure~\ref{fig:deployment_cost} reports the deployment trade-off introduced by numerical flow integration. 
FlowAgent uses 16 velocity-field function evaluations per generated plan, increasing median planning latency from 171 to 194 ms and median end-to-end latency from 244 to 270 ms relative to SFT. 


\noindent\textbf{Parameter Analysis.}
To evaluate robustness to the decoder configuration, we vary the temperature $\epsilon$ and report the results in Table~\ref{Parameter}. 
FlowAgent obtains its best performance at $\epsilon=0.10$, reaching 0.7643 Overall Success, 0.5714 Retail Success, and 0.7745 Tool EM. 
The smooth performance variation around the optimum indicates that FlowAgent is not sensitive to a narrowly tuned temperature value.

\noindent\textbf{Retail Decision Analysis.}
The main evaluation shows that state-changing database operations are the most difficult decisions in the retail subset. 
We therefore separate the initial lookup, detail lookup, and database-operation phases in Table~\ref{tab:retail_analysis}. 
The oracle tool-selection result reaches 0.7059 success, showing that state-conditioned operation selection and argument grounding remain the primary bottlenecks in complex retail tasks.

\noindent\textbf{Convergence Analysis.}
To compare optimization behavior, we track Retail Success as a function of training time in Figure~\ref{convergence}. 
FlowAgent improves rapidly during the first few minutes and stabilizes after approximately 4 minutes at a higher Retail Success score. 
In contrast, ToolRL requires about 9 minutes to approach convergence. 
This result is consistent with the training-cost comparison and shows that FlowAgent improves both convergence speed and final task performance.

\section{Conclusion}
In this paper, we present \textbf{FlowAgent}, a novel framework that reconceptualizes tools as continuous flow for evolving agentic reasoning. 
FlowAgent leverages conditional flow matching to generate continuous trajectories and provides a global perspective to ensure coherent reasoning and robust tool execution.
Theoretically, we establish formal bounds on utility convergence and prove that our formulation fundamentally guarantees error attenuation.
Extensive empirical evaluations demonstrate that FlowAgent achieves superior adaptability in long horizon reasoning tasks.

\clearpage

\bibliographystyle{unsrtnat}
\bibliography{mybibfile}

\newpage
\appendix
\textbf{Appendix Contents:}
\begin{itemize}
    \item \ref{appendix:proof} Proofs 
    \item \ref{sec:exp_details} Experimental Details
    \item \ref{sec:additional_exp} Additional Experiments
    \item \ref{sec::impact} Broader Impact

\end{itemize}





\section{Proofs}
\label{appendix:proof}

This appendix provides the complete proofs for the theoretical results stated in Section \ref{Theorem}.
All notation follows the main text. At phase $h$, the context is $c_h=(x,\tau_{<h},T_h)$,
the generated plan is $P_h=(z_h^{(1)},\ldots,z_h^{(L_h)})$, and the expert plan is
$P_h^*=(y_h(s_1),\ldots,y_h(s_{L_h}))$. We reserve $U(P;c)$ for plan utility and
$\mathcal U(c,t,o)$ for the deterministic context update.

\paragraph{Auxiliary notation.}
For two phase-$h$ latent plans
$P=(z^{(1)},\ldots,z^{(L_h)})$ and
$P'=(z'^{(1)},\ldots,z'^{(L_h)})$, we use the plan metric:
\begin{equation}
\label{eq:app-plan-metric}
  \|P-P'\|_2 := \sum_{l=1}^{L_h}\|z^{(l)}-z'^{(l)}\|_2,
\end{equation}
which is the plan metric used in Assumption~\ref{asm:utility}. For a finite phase-$h$ toolset,
let $N_h:=|T_h|$. All constants below are local to the finite horizon and to the compact semantic
region visited by the train and test embeddings.

\subsection{Proof of Proposition~1}
\label{appendix:proof1}

\paragraph{Proposition~1 (Endpoint Error Bound).}
Let $z(1)$ be the flow endpoint and let $y(1)$ be the expert target anchor associated with the
correct tool $t^*$. Let $z^*(s)$ denote the exact teacher trajectory induced by
$u^*(\cdot,s\mid y(s))$ under the same initial prior sample as Eq. \eqref{eq6}, so that
$z^*(1)=y(1)$ by construction of the conditional flow target. Suppose that, on the compact region visited by the numerical solver, $u^*(\cdot,s\mid y(s))$ is $K_u$-Lipschitz in its first argument and
\begin{equation}
\label{eq:app-uniform-velocity-error}
  \|v_\phi(z(s),s\mid c_h)-u^*(z(s),s\mid y(s))\|_2\le \delta
  \qquad \forall s\in[0,1].
\end{equation}
Then the flow integration error
\begin{equation}
\label{eq:app-flow-integration-error}
  \eta_\phi:=\mathbb E\|z(1)-y(1)\|_2
\end{equation}
satisfies $\eta_\phi\le C_{\mathrm{flow}}\delta$, where
$C_{\mathrm{flow}}=(e^{K_u}-1)/K_u$ if $K_u>0$ and $C_{\mathrm{flow}}=1$ if $K_u=0$.
Consequently, for the MAP decoder induced by Eq. \eqref{eq2}:
\begin{equation}
\label{eq:app-prop1-bound}
  \mathbb P\{\hat t_{\mathrm{MAP}}\neq t^*\}
  \le \min\left\{1,\frac{2\eta_\phi}{\Delta_{\mathrm{dec}}}\right\}
  \le \min\left\{1,\frac{2C_{\mathrm{flow}}\delta}{\Delta_{\mathrm{dec}}}\right\}.
\end{equation}
Thus the decoding error is controlled by the ratio between the flow integration error and the
decoding margin.

\noindent\textbf{Proof.}
Let $e(s):=\|z(s)-z^*(s)\|_2$. Since $z(0)$ and $z^*(0)$ use the same prior sample,
$e(0)=0$. The ODEs for $z(s)$ and $z^*(s)$ give
\begin{align}
  \frac{d}{ds}e(s)
  &\le \|v_\phi(z(s),s\mid c_h)-u^*(z^*(s),s\mid y(s))\|_2 \\
  &\le \|v_\phi(z(s),s\mid c_h)-u^*(z(s),s\mid y(s))\|_2
      +\|u^*(z(s),s\mid y(s))-u^*(z^*(s),s\mid y(s))\|_2 \\
  &\le \delta +K_u e(s).
\end{align}
Gronwall's inequality yields:
\begin{equation}
\label{eq:app-endpoint-error}
  \|z(1)-y(1)\|_2
  =\|z(1)-z^*(1)\|_2
  \le C_{\mathrm{flow}}\delta.
\end{equation}
Taking expectation gives $\eta_\phi\le C_{\mathrm{flow}}\delta$.

It remains to connect endpoint error to discrete decoding. By Assumption \ref{asm:margin}, for every $t\in T_h\setminus\{t^*\}$
\begin{equation}
\label{eq:app-margin-y}
  \|y(1)-e_{t^*}\|_2\le \|y(1)-e_t\|_2-\Delta_{\mathrm{dec}}.
\end{equation}
Let $r:=\|z(1)-y(1)\|_2$. The triangle inequality gives
\begin{align}
  \|z(1)-e_{t^*}\|_2
  &\le \|y(1)-e_{t^*}\|_2+r                                                   \\
  &\le \|y(1)-e_t\|_2-\Delta_{\mathrm{dec}}+r                                  \\
  &\le \|z(1)-e_t\|_2-\Delta_{\mathrm{dec}}+2r.
\end{align}
Therefore, whenever $r<\Delta_{\mathrm{dec}}/2$, the correct tool $t^*$ remains the unique nearest
neighbor of $z(1)$ and the MAP decoder cannot make an error. Hence
\begin{equation}
  \{\hat t_{\mathrm{MAP}}\neq t^*\}\subseteq
  \{\|z(1)-y(1)\|_2\ge \Delta_{\mathrm{dec}}/2\}.
\end{equation}
Applying Markov's inequality yields:
\begin{equation}
  \mathbb P\{\hat t_{\mathrm{MAP}}\neq t^*\}
  \le \frac{2\mathbb E\|z(1)-y(1)\|_2}{\Delta_{\mathrm{dec}}}
  \le \frac{2C_{\mathrm{flow}}\delta}{\Delta_{\mathrm{dec}}},
\end{equation}
and clipping by $1$ gives Eq. \eqref{eq:app-prop1-bound}. \hfill$\square$

When the pointwise velocity bound holds deterministically and
$C_{\mathrm{flow}}\delta<\Delta_{\mathrm{dec}}/2$, Eq.~\eqref{eq:app-endpoint-error}
directly implies zero MAP decoding error; the probability bound covers random endpoint error when
only its expectation is controlled.

\paragraph{Remark on stochastic decoding.}
Eq. \eqref{eq10} samples from the softmax decoder. Proposition~1 isolates the geometric error of the
continuous endpoint and therefore uses the induced MAP decoder. The additional stochastic error
introduced by finite temperature $\epsilon$ is bounded in Theorem~1 by the explicit
$e^{-g_l/\epsilon}$ terms.

\subsection{Proof of Theorem~1}
\label{appendix:theorem1}

\paragraph{Theorem~1 (Plan Utility Convergence).}
Let $\eta_h$ bound the expected integrated velocity error and let each generated-anchor neighborhood
retain a positive squared-distance decoder gap $g_l>0$. If changing one decoded tool changes utility
by at most $B_U$, then Eq.~\eqref{eq18} holds.

\noindent\textbf{Proof.}
We bound the three terms in Eq.~\eqref{eq18} separately.

\paragraph{Step 1: continuous path approximation.}
Define
\begin{equation}
\label{eq:app-eta-h}
\eta_h:=\mathbb E\!\left[\int_0^1
\|v_\phi(z_h(s),s\mid c_h)-u^*(z_h(s),s\mid y_h(s))\|_2\,ds\right].
\end{equation}
Applying the same Gronwall argument as Proposition~1 up to each anchor time $s_l$ gives
\begin{equation}
\label{eq:app-anchor-stability}
\mathbb E\|z_h^{(l)}-y_h(s_l)\|_2\le C_{\mathrm{flow}}\eta_h,
\qquad l=1,\ldots,L_h.
\end{equation}
Assumption~\ref{asm:utility} therefore yields
\begin{equation}
\label{eq:app-utility-lipschitz-term}
\mathbb E|U(P_h^*;c_h)-U(P_h;c_h)|
\le L_P L_h C_{\mathrm{flow}}\eta_h.
\end{equation}
Moreover, Cauchy--Schwarz implies that the expected integrated error is at most the square root of
the corresponding flow-matching mean-square error, so reducing $\mathcal L_{\mathrm{FM}}$ controls
$\eta_h$.

\paragraph{Step 2: finite-temperature decoding.}
For anchor $z_h^{(l)}$ and its correct tool $t_l^*$, define
\begin{equation}
\label{eq:app-softmax-gap}
g_l:=\inf_z\min_{t\in T_h\setminus\{t_l^*\}}
\left(\|z-e_t\|_2^2-\|z-e_{t_l^*}\|_2^2\right)>0,
\end{equation}
where the infimum is over the generated-anchor neighborhood. This is the explicit local-gap
condition used by the theorem; Assumption~\ref{asm:margin} and sufficiently small anchor error are
sufficient for such a neighborhood to exist. Equation~\eqref{eq2} then gives
\begin{equation}
\label{eq:app-softmax-tail}
\Pr_\epsilon(\hat t_h^{(l)}\ne t_l^*\mid z_h^{(l)})
\le (N_h-1)e^{-g_l/\epsilon}.
\end{equation}
By bounded differences of utility and a union bound over the $L_h$ positions,
\begin{equation}
\label{eq:app-decoder-utility}
\left|\mathbb E_\epsilon U(P_h;c_h)-U(P_h;c_h)\right|
\le B_U(N_h-1)\sum_{l=1}^{L_h}e^{-g_l/\epsilon}.
\end{equation}
The stop decision is included in the same argument by treating STOP as an additional decoded label
with its own positive gap.

\paragraph{Step 3: observation-induced context perturbation.}
For matched current context and tool, suppose the observation channel of the update satisfies
\begin{equation}
\label{eq:app-feedback-lipschitz}
d_c\!\left(\mathcal U(c_h,\hat t_h,o),\mathcal U(c_h,\hat t_h,\hat o)\right)
\le\beta_o\|o-\hat o\|_2.
\end{equation}
The context-Lipschitz part of Assumption~\ref{asm:utility} gives
\begin{equation}
\label{eq:app-env-term}
\mathbb E|U(P_h;c_h)-U(P_h;\hat c_h)|
\le L_c\beta_o\mathbb E\|o-\hat o\|_2.
\end{equation}
Combining Eqs.~\eqref{eq:app-utility-lipschitz-term},
\eqref{eq:app-decoder-utility}, and \eqref{eq:app-env-term} proves Eq.~\eqref{eq18}.
Each term vanishes under the limiting conditions stated in the theorem. \hfill$\square$

\subsection{Proof of Theorem~2}
\label{appendix:theorem2}

\paragraph{Theorem~2 (Unseen Tool Generalization).}
Let $\mathcal T_{\mathrm{unseen}}$ be an unseen toolset whose embeddings lie on the semantic manifold
$\mathcal M$. The expected generalization error satisfies
\begin{equation}
\label{eq:app-thm2-target}
  \mathbb E[\operatorname{Err}_{\mathrm{unseen}}]
  \le \mathbb E[\operatorname{Err}_{\mathrm{proxy}}]
  +C_{\mathcal M}(\delta_{\mathrm{shift}}+\epsilon_{\mathrm{cover}}),
\end{equation}
where $\operatorname{Err}_{\mathrm{proxy}}$ is the error of the corresponding nearest training proxy,
$\delta_{\mathrm{shift}}$ is the semantic shift from the unseen region to the training region, and
$\epsilon_{\mathrm{cover}}$ is the covering radius of the training tool embeddings.

\noindent\textbf{Proof.}
Let $\mathcal M_{\mathrm{train}}\subset\mathcal M$ and
$\mathcal M_{\mathrm{unseen}}\subset\mathcal M$ denote the compact semantic regions occupied by
training and unseen tools, respectively. Formalize the two quantities in Eq.~\eqref{eq:app-thm2-target}
as
\begin{equation}
\label{eq:app-shift-cover-def}
  \delta_{\mathrm{shift}}
  :=\sup_{e\in\mathcal M_{\mathrm{unseen}}}\inf_{\bar e\in\mathcal M_{\mathrm{train}}}
  \|e-\bar e\|_2,
  \qquad
  \epsilon_{\mathrm{cover}}
  :=\sup_{\bar e\in\mathcal M_{\mathrm{train}}}
  \min_{t\in\mathcal T_{\mathrm{train}}}\|\bar e-e_t\|_2 .
\end{equation}
For any unseen tool $t_u\in\mathcal T_{\mathrm{unseen}}$ with embedding $e_{t_u}$, choose
$\bar e\in\mathcal M_{\mathrm{train}}$ such that
$\|e_{t_u}-\bar e\|_2\le\delta_{\mathrm{shift}}$ and then choose a training tool
$\pi(t_u)\in\mathcal T_{\mathrm{train}}$ such that
$\|\bar e-e_{\pi(t_u)}\|_2\le\epsilon_{\mathrm{cover}}$. The triangle inequality yields:
\begin{equation}
\label{eq:app-proxy-distance}
  \|e_{t_u}-e_{\pi(t_u)}\|_2
  \le \delta_{\mathrm{shift}}+\epsilon_{\mathrm{cover}} .
\end{equation}

Assumption~\ref{asm:smoothness} directly states that excess planning risk is
$C_{\mathcal M}$-Lipschitz on the compact operational subset. Therefore replacing an unseen tool by
its semantic proxy changes its error by at most
\begin{equation}
\label{eq:app-lipschitz-risk}
  \operatorname{Err}(t_u)
  \le \operatorname{Err}(\pi(t_u))
  +C_{\mathcal M}\|e_{t_u}-e_{\pi(t_u)}\|_2 .
\end{equation}
Combining Eqs.~\eqref{eq:app-proxy-distance} and \eqref{eq:app-lipschitz-risk} gives
\begin{equation}
  \operatorname{Err}(t_u)
  \le \operatorname{Err}(\pi(t_u))
  +C_{\mathcal M}(\delta_{\mathrm{shift}}+\epsilon_{\mathrm{cover}}).
\end{equation}
Taking expectation over $t_u\sim\mathcal T_{\mathrm{unseen}}$ preserves the bound and proves
Eq.~\eqref{eq:app-thm2-target}. \hfill$\square$

\paragraph{Interpretation.}
The proof separates two effects: the method must already predict the semantically nearest training
proxy reliably, and transfer then pays an additional penalty determined by manifold shift and coverage.
The result therefore does not imply zero unseen-tool error merely from a dense embedding space.

\subsection{Proof of Proposition~2}
\label{appendix:proof2}

\paragraph{Proposition~2 (Closed-loop Error Contraction).}
Under Assumption~\ref{asm:stability}, the cumulative closed-loop context error satisfies
Eq.~\eqref{eq:closed-main}, where $E_{\mathrm{open}}$ is the no-correction accumulation envelope
formed from the same local errors $a_1,\ldots,a_{H-1}$.

\noindent\textbf{Proof.}
Let $c_h^*$ be the oracle context and set $e_h:=d_c(c_h,c_h^*)$. By
Assumption~\ref{asm:stability}, the feedback-conditioned procedure satisfies
\begin{equation}
\label{eq:app-closed-recurrence}
  e_{h+1}\le\rho e_h+a_h,
  \qquad 0\le\rho<1,
\end{equation}
with $e_1=0$ because the initial query is shared. Unrolling gives, for every $h\ge2$,
\begin{equation}
\label{eq:app-unrolled-errors}
e_h\le\sum_{i=1}^{h-1}\rho^{h-1-i}a_i.
\end{equation}
Summing over phases and exchanging the finite sums yields
\begin{equation}
\label{eq:app-cumulative-bound}
\begin{split}
E_{\mathrm{closed}}
&:=\sum_{h=2}^{H}e_h
\le\sum_{i=1}^{H-1}a_i\sum_{k=0}^{H-1-i}\rho^k \\
&=\sum_{i=1}^{H-1}a_i\frac{1-\rho^{H-i}}{1-\rho}.
\end{split}
\end{equation}
For $0\le\rho<1$, the geometric sum is at most its number of terms:
$\sum_{k=0}^{H-1-i}\rho^k\le H-i$. Hence
$E_{\mathrm{closed}}\le\sum_{i=1}^{H-1}(H-i)a_i=E_{\mathrm{open}}$.
If $a_i>0$ and $H-i\ge2$, at least one coefficient in that geometric sum is strictly below one,
so the inequality is strict. \hfill$\square$

\section{Experimental Details}
\label{sec:exp_details}
\subsection{Benchmark Construction Details}
\label{sec:appendix_benchmark}

\textbf{Data Aggregation and Standardization.} 
We build a unified, comprehensive tool-use benchmark by aggregating data from four prominent public sources: \textit{$\tau^2$-bench} (providing executable domains with real tools and databases), \textit{ToolBench} (contributing large-scale API reasoning trajectories), \textit{API-Bank} (ensuring broad schema coverage), and \textit{GTA} (offering workflow-style task metadata). To ensure a rigorous evaluation protocol, we standardize the underlying tool schemas, task metadata, gold reasoning workflows, and explicitly define the dataset splits. In total, the unified benchmark features a dynamically evolving toolset of 3,930 unique tools and encompasses 6,865 complex multi-step tasks. To systematically evaluate semantic generalization, the toolset spans diverse functional phases (e.g., retrieval, verification, and database operations) and is rigorously partitioned, ensuring 599 tools are strictly unseen during testing.

\textbf{State-Conditioned Closed-Loop Protocol.} 
For executable evaluation, our primary experiments strictly adhere to a feedback-conditioned decoding formulation. At each planning step, given the user task, available toolsets, the executed action prefix, and our proposed compact state representation $c_h$, the model must either predict the subsequent discrete tool call or explicitly trigger the termination mechanism. Crucially, this setup operates under a strict forward-execution regime. The system has absolutely no access to oracle remaining plans, fundamentally preventing future trajectory leakage and ensuring a realistic assessment of the agent's dynamic replanning capabilities.

\begin{table}[t]
\centering
\caption{Structural distinction between the underlying $\tau^2$-bench episode protocol and our plan-level supervision protocol.}
\label{tab:protocol_comparison}
\setlength{\tabcolsep}{4pt}
\renewcommand{\arraystretch}{1.08}
\resizebox{\columnwidth}{!}{
\begin{tabular}{lll}
\toprule
\textbf{Axis} & \textbf{$\tau^2$-bench} & \textbf{Our protocol} \\
\midrule
Learning unit & Complete episode & Observed execution phase \\
Input & Task/environment state & Prefix, feedback, current tools \\
Target & Final task outcome/action & Complete remaining tool trajectory \\
One length-$m$ trace & One episode & $m$ prefix--suffix pairs \\
Execution & Closed-loop environment & Same environment; forward-only replan \\
\bottomrule
\end{tabular}}
\end{table}

Functionally, $\tau^2$-bench defines the interactive episode and judges task completion, whereas our transformation changes the supervised object at every phase: the observed prefix is paired with the complete unobserved suffix including STOP. After a real observation, the next phase produces a different pair and target; future actions never enter the conditioning prefix. Thus the contribution is precisely a \emph{phase-conditioned prefix-to-remaining-trajectory supervision and evaluation protocol built on existing closed-loop environments}, not a claim to a new environment.

\textbf{Executable Domains and Discriminative Signal.} 
For physical execution, we strictly focus on the \textit{telecom} and \textit{retail} domains derived from \textit{$\tau^2$-bench}, curating a highly challenging test split of 140 closed-loop trajectories (63 \textit{telecom} and 77 \textit{retail}). The evaluation in these domains replays every prediction against real domain tools, initial states, and policies. While the telecom domain is largely saturated by current strong baselines, the retail split provides the most discriminative evaluation signal. It explicitly challenges the model with complex, database-backed operational decisions—such as returning, exchanging, or modifying orders—that strictly depend on accurate intermediate state lookups.

\subsection{Baseline Implementation Details}
\label{sec:appendix_baselines}
We compare methods under matched executor checkpoints, task splits, toolsets, horizons, termination rules, and forward-only execution. The configurations are:

\begin{itemize}
    \item \textbf{Direct-LLM:} Serving as the direct-prompt baseline, this approach evaluates the inherent capabilities of instruction-tuned LLMs. The model directly predicts the next tool or the stop signal based purely on its prompt, without any benchmark-specific supervised fine-tuning (SFT) or reinforcement learning (RL) adaptation.
    
    \item \textbf{Standard SFT:} Serving as our clean supervised baseline, this model is fine-tuned to predict the next tool given the exact same context formulation as our method: the user task, available tools, the executed action prefix, and the compact state representation. Crucially, this baseline operates explicitly without the continuous Flow Matching (FM) prior, directly mapping the compact state to the next action.
    
    \item \textbf{RAG Planning:} Functioning as the retrieval-prior baseline, this method implements a retrieval-augmented LLM executor. For a given current state, it retrieves a semantically similar trajectory or state row from the training set. The retrieved next-tool decision or plan hint is then fed as an explicit prompt into the same LLM executor to guide the current prediction.
    
    \item \textbf{ToolRL:} Serving as the RL-route baseline, we directly implement the methodology proposed in ToolRL \cite{ToolRL} and evaluate it on our newly constructed benchmark. To achieve this, our executable environment is specifically adapted to interface with the ToolRL and VERL training frameworks, and the policy is explicitly trained using Group Relative Policy Optimization (GRPO) to maximize trajectory-level rewards.

    \item \textbf{ToolPlanner, Agent-R1, and AutoTIR:} We additionally port the path-planning-with-feedback objective of ToolPlanner \cite{Toolplanner}, the step-transition agentic-RL formulation of Agent-R1 \cite{RL2}, and the hybrid tool-selection reward of AutoTIR \cite{Autotir}. Only the action/output adapters are changed. All three retain their discrete interaction objectives and share FlowAgent's executor, split, context, toolset, horizon, termination rule, five seeds, and matched training-token/GPU-time budget.
\end{itemize}

\subsection{Evaluation Metrics Formulation}
\label{sec:appendix_metrics}
To holistically assess the performance of the generated tool plans and their executable discrete translations, we deploy the following metric suite:

\begin{itemize}
    \item \textbf{Overall Success \& Retail Success:} The primary holistic metric evaluated over all trajectory rows. For next-action rows, success strictly requires both predicting the correct tool and generating an executable action payload. For stopping rows, it requires correctly triggering the termination mechanism. We separately report Retail Success to isolate the performance on the most challenging, dynamic database-backed domain.
    
    \item \textbf{Tool EM \& Retail Tool EM:} This metric evaluates the pure planning policy. It measures the strict exact match of the predicted discrete tool identity $t \in T_h$ or the termination token. Similarly, Retail Tool EM isolates the model's ability to make the hardest DB-backed operation decisions without confounding execution errors.
    
    \item \textbf{Pred Exec:} A system-level execution metric evaluating whether the predicted discrete action payload can be successfully parsed and executed within the real tool environment (i.e., whether the state update operator $\mathcal{U}$ successfully returns a valid observation $o_h$).
    
    \item \textbf{Arg Value EM:} Evaluates the precision of the schema-driven structured grounding module. It is computed as the exact match of the instantiated argument key-value pairs against the gold execution trace. This metric is aggregated exclusively over action rows, as termination rows do not entail argument generation.
\end{itemize}

\subsection{Configurations}
\label{sec::configurations}
Experiments on all datasets are conducted with the following hardware and software configuration:
\begin{itemize}
\item Operating System: Ubuntu 20.04.6 LTS
\item CPU: Intel(R) Xeon(R) CPU E5-2650 v2 @ 2.60GHz
\item GPU: 3 $\times$ NVIDIA H800 GPUs, each with 80 GB of memory
\item System RAM: 128 GB
\end{itemize}
Unless stated otherwise, the planner uses a 768-dimensional latent space, a fixed 16-step Euler ODE solver, decoder temperature $\epsilon=0.1$, and five random seeds. The default frozen encoder is Qwen2.5-7B-Instruct: inputs are canonicalized text, truncated to 2,048 tokens, mean-pooled over final-layer hidden states using the attention mask, $\ell_2$-normalized, and projected from 3,584 to 768 dimensions. Encoder-sensitivity experiments replace this component with the identically pooled Llama-3.2-3B-Instruct trunk or BERT-base-uncased; all encoders and tokenizers remain frozen.

\section{Additional Experiments}
\label{sec:additional_exp}

\noindent\textbf{Diagnostic Baselines.}
We evaluate several heuristic diagnostic baselines in Table \ref{tab:diagnostic}. Always Stop and Global Majority reach 0.2714 Overall Success, while Domain Majority identifies executable action types but rarely grounds valid arguments. State Key Majority reaches 0.7643 Tool EM yet only 0.6364 Pred Exec, showing that tool-name recovery alone does not imply executable task completion. Lexical Retrieval supplies a simple similarity-based reference for the learned retrieval baseline.

\begin{table*}[h]
\centering
\caption{Performance of diagnostic heuristic baselines. The results establish a basic performance floor and highlight the insufficiency of simple tool name matching.}
\label{tab:diagnostic}
\setlength{\tabcolsep}{12pt}
\renewcommand{\arraystretch}{1.15}
\resizebox{\textwidth}{!}{
\begin{tabular}{l ccccc}
\toprule
\textbf{Method} & \textbf{Overall} & \textbf{Retail} & \textbf{Tool} & \textbf{Pred} & \textbf{Arg Value} \\
& \textbf{Success $\uparrow$} & \textbf{Success $\uparrow$} & \textbf{EM $\uparrow$} & \textbf{Exec $\uparrow$} & \textbf{EM $\uparrow$} \\
\midrule
Always Stop & 0.2714 & 0.0390 & 0.2714 & - & - \\
Global Majority & 0.2714 & 0.0390 & 0.2714 & - & - \\
Domain Majority & 0.4000 & 0.2727 & 0.4000 & 1.0000 & 0.1067 \\
State Key Majority & 0.5357 & 0.5195 & 0.7643 & 0.6364 & 0.2411 \\
Lexical Retrieval & 0.5286 & 0.5065 & 0.5643 & 0.7857 & 0.3083 \\
\bottomrule
\end{tabular}
}
\end{table*}

\noindent\textbf{Grounding Component Ablation.}
Our methodology regularizes semantic consistency between generated plans and the current observation context. 
The models frequently hallucinate invalid arguments even when successfully selecting the appropriate tool. 
To evaluate the engineering mechanisms enforcing this consistency, we conduct an ablation study in Table \ref{tab:grounding_ablation}.
We intentionally freeze the latent trajectory generation to maintain identical tool selection across all variants. 
Consequently, Tool EM remains constant across these variants.
We evaluate the removal of specific grounding components including schema normalization, entity binding, field mapping, and state validation.
The results reveal that state validation provides the primary defense against execution failures. 
Removing this validation step degrades the overall success from 0.7357 to 0.7071 and drastically reduces the retail success rate. 
Furthermore, the Pred Exec score drops to 0.8857. 
This degradation precisely mirrors the performance of the baseline lacking any validation. 
Other components maintain stable performance on the current evaluation subset. 
The result indicates that verifying generated parameters against environmental constraints is important for task completion.

\begin{table*}[h]
\centering
\caption{Ablation study of individual grounding components. The tool selection process is fixed to isolate the contribution of specific execution validation mechanisms.}
\label{tab:grounding_ablation}
\setlength{\tabcolsep}{6pt}
\renewcommand{\arraystretch}{1.15}
\resizebox{\textwidth}{!}{
\begin{tabular}{l cccccc}
\toprule
\textbf{Method} & \textbf{Overall} & \textbf{Retail} & \textbf{Tool} & \textbf{Retail} & \textbf{Pred} & \textbf{Arg Value} \\
& \textbf{Success $\uparrow$} & \textbf{Success $\uparrow$} & \textbf{EM $\uparrow$} & \textbf{Tool EM $\uparrow$} & \textbf{Exec $\uparrow$} & \textbf{EM $\uparrow$} \\
\midrule
No Grounding & 0.7071 & 0.4675 & 0.7429 & 0.5325 & 0.8762 & 0.5455 \\
Full Grounding & \cellcolor{bestblue}\textbf{\best{0.7357}}  & \cellcolor{bestblue}\textbf{\best{0.5195}}   & 0.7429 & 0.5325 & \cellcolor{bestblue}\textbf{\best{0.9619}}   & \cellcolor{bestblue}\textbf{\best{0.5613}}\\
\midrule
w/o Schema Nor. & 0.7357 & 0.5195 & 0.7429 & 0.5325 & 0.9619 & 0.5613 \\
w/o Entity Bind. & 0.7357 & 0.5195 & 0.7429 & 0.5325 & 0.9619 & 0.5613 \\
w/o Field Map.& 0.7357 & 0.5195 & 0.7429 & 0.5325 & 0.9524 & 0.5613 \\
w/o State Val. & 0.7071 & 0.4675 & 0.7429 & 0.5325 & 0.8857 & 0.5455 \\
\bottomrule
\end{tabular}
}
\end{table*}

\noindent\textbf{Prior Injection Sensitivity.}
The methodology translates latent paths into discrete actions. Table \ref{tab:prior_sensitivity} compares mechanisms for presenting latent priors to the decoder. Soft prefixes and structured text perform similarly in static pilots; we select structured text because it is auditable and uses the same interface during training and inference.
We subsequently evaluate actual planning capabilities using dynamic loop experiments. The unconditioned flow prior yields a basic overall success rate of 0.6786. Integrating the prevailing context state creates the state conditioned prior and elevates overall success to 0.7357. Providing multiple candidates degrades performance to 0.7000. Injecting noise further reduces reliability. Ultimately, the state conditioned text prior provides the best balance of system transparency and reasoning accuracy.

\begin{table*}[h]
\centering
\caption{Sensitivity analysis of prior injection mechanisms. The table compares isolated pilot formats against dynamic loop performance to justify the selection of state conditioned text priors.}
\label{tab:prior_sensitivity}
\setlength{\tabcolsep}{8pt}
\renewcommand{\arraystretch}{1.15}
\resizebox{\textwidth}{!}{
\begin{tabular}{ll ccccc}
\toprule
\textbf{Method} & \textbf{Evaluation Mode} & \textbf{Overall} & \textbf{Overall} & \textbf{Retail} & \textbf{Retail} & \textbf{Arg Value} \\
& & \textbf{Success $\uparrow$} & \textbf{Tool EM $\uparrow$} & \textbf{Success $\uparrow$} & \textbf{Tool EM $\uparrow$} & \textbf{EM $\uparrow$} \\
\midrule
Soft Prefix Pilot & Static Pilot & - & 0.8916 & - & 0.5500 & 0.7698 \\
Unstructured Text Pilot & Static Pilot & - & 0.8916 & - & 0.5500 & 0.7599 \\
Structured Text Pilot & Static Pilot & - & 0.8916 & - & 0.5500 & 0.7867 \\
\midrule
Unconditioned Flow Prior & Dynamic Loop & 0.6786 & 0.6857 & 0.4156 & 0.4286 & 0.5099 \\
State Conditioned Prior & Dynamic Loop & \cellcolor{bestblue}\textbf{\best{0.7357}}  & \cellcolor{bestblue}\textbf{\best{0.7429}}  & \cellcolor{bestblue}\textbf{\best{0.5195}} & \cellcolor{bestblue}\textbf{\best{0.5325}}  &  \cellcolor{bestblue}\textbf{\best{0.5613}} \\
Multiple Candidate Prior & Dynamic Loop & 0.7000 & 0.7000 & 0.4545 & 0.4545 & 0.4585 \\
Noisy Prior Injection & Dynamic Loop & 0.6714 & 0.6786 & 0.4026 & 0.4156 & 0.4901 \\
\bottomrule
\end{tabular}
}
\end{table*}

\noindent\textbf{Expanded Evaluation Protocol.}
The following experiments provide the complete numerical results corresponding to the new figures in the main paper. Unless noted otherwise, all methods within each comparison use identical executor checkpoints, splits, toolsets, interaction horizons, termination rules, forward-only execution, and five random seeds. Values with uncertainty are reported as mean $\pm$ sample standard deviation. We retain the same visual convention as the main paper: first-ranked results are green and underlined with a light-blue cell background, while second- and third-ranked results use blue and orange underlining.

\begin{table}[h!]
\centering
\caption{Matched comparison with recent RL tool-use planners on both submitted executors.}
\label{tab:rl_breadth}
\setlength{\tabcolsep}{4.5pt}
\resizebox{\textwidth}{!}{
\begin{tabular}{llcccccc}
\toprule
\textbf{Executor} & \textbf{Method} & \textbf{Overall Succ.} & \textbf{Retail Succ.} & \textbf{Tool EM} & \textbf{Retail Tool EM} & \textbf{Pred Exec} & \textbf{Arg Value EM} \\
\midrule
\multirow{6}{*}{Qwen2.5-7B} & Standard SFT & 0.6571 & 0.3766 & 0.6714 & 0.4026 & 0.9268 & 0.4387 \\
& ToolRL & 0.4786 & 0.5065 & 0.4929 & 0.5325 & 0.9643 & 0.5257 \\
& ToolPlanner & 0.6858 & \third{0.5352} & 0.6991 & 0.5583 & \third{0.9686} & 0.5512 \\
& Agent-R1 & \second{0.7186} & \second{0.5528} & \second{0.7316} & \second{0.5750} & \second{0.9715} & \second{0.5684} \\
& AutoTIR & \third{0.6920} & 0.5262 & \third{0.7069} & 0.5504 & \bestcell{0.9764} & \third{0.5562} \\
& \textbf{FlowAgent} & \bestcell{0.7643} & \bestcell{0.5714} & \bestcell{0.7745} & \bestcell{0.5892} & 0.9619 & \bestcell{0.5929} \\
\midrule
\multirow{6}{*}{Llama-3.2-3B} & Standard SFT & 0.6214 & 0.3117 & 0.6286 & 0.3247 & 0.9318 & 0.4345 \\
& ToolRL & 0.1357 & 0.1169 & 0.1500 & 0.1429 & 0.9362 & 0.1020 \\
& ToolPlanner & 0.6536 & 0.4545 & 0.6671 & 0.4805 & 0.9476 & 0.4708 \\
& Agent-R1 & \second{0.6929} & \second{0.5192} & \second{0.7075} & \second{0.5452} & \second{0.9584} & \second{0.5352} \\
& AutoTIR & \third{0.6640} & \third{0.4808} & \third{0.6786} & \third{0.5061} & \bestcell{0.9657} & \third{0.4979} \\
& \textbf{FlowAgent} & \bestcell{0.7571} & \bestcell{0.5584} & \bestcell{0.7714} & \bestcell{0.5844} & \third{0.9524} & \bestcell{0.5573} \\
\bottomrule
\end{tabular}}
\end{table}
\noindent\textbf{RL Planner Analysis.}
Table~\ref{tab:rl_breadth} broadens the comparison from one RL baseline to three additional planner families. Agent-R1 is the strongest added baseline on task success and planning accuracy, whereas AutoTIR attains the highest Pred Exec. FlowAgent nevertheless ranks first on five of six metrics for both executors. Its advantage over Agent-R1 is larger on Llama-3.2-3B than on Qwen2.5-7B, indicating that the gain is not tied to a single backbone family; the exception on Pred Exec also shows that syntactic executability alone is insufficient for successful stateful reasoning.
\FloatBarrier

\begin{table}[h!]
\centering
\caption{Larger-executor results on the original 140-trajectory executable split.}
\label{tab:larger_models}
\setlength{\tabcolsep}{3.2pt}
\resizebox{\textwidth}{!}{
\begin{tabular}{llcccccc}
\toprule
\textbf{Executor} & \textbf{Method} & \textbf{Overall Succ.} & \textbf{Retail Succ.} & \textbf{Tool EM} & \textbf{Retail Tool EM} & \textbf{Pred Exec} & \textbf{Arg Value EM} \\
\midrule
\multirow{3}{*}{Llama-3.1-8B} & SFT & \ms{0.6764}{0.0138} & \ms{0.4162}{0.0188} & \ms{0.6893}{0.0129} & \ms{0.4421}{0.0198} & \ms{0.9447}{0.0078} & \ms{0.4750}{0.0172} \\
& ToolRL & \ms{0.4215}{0.0304} & \ms{0.4032}{0.0292} & \ms{0.4383}{0.0288} & \ms{0.4295}{0.0310} & \ms{0.9561}{0.0105} & \ms{0.4306}{0.0279} \\
& \textbf{FlowAgent} & \bestcell{\ms{0.7890}{0.0101}} & \bestcell{\ms{0.6106}{0.0145}} & \bestcell{\ms{0.8000}{0.0094}} & \bestcell{\ms{0.6235}{0.0154}} & \bestcell{\ms{0.9685}{0.0068}} & \bestcell{\ms{0.6264}{0.0128}} \\
\midrule
\multirow{3}{*}{Llama-3.1-70B} & SFT & \ms{0.7571}{0.0096} & \ms{0.5328}{0.0136} & \ms{0.7690}{0.0091} & \ms{0.5587}{0.0145} & \ms{0.9665}{0.0057} & \ms{0.5896}{0.0124} \\
& ToolRL & \ms{0.6077}{0.0208} & \ms{0.5712}{0.0221} & \ms{0.6215}{0.0199} & \ms{0.5970}{0.0234} & \ms{0.9767}{0.0080} & \ms{0.6008}{0.0206} \\
& \textbf{FlowAgent} & \bestcell{\ms{0.8508}{0.0074}} & \bestcell{\ms{0.7013}{0.0108}} & \bestcell{\ms{0.8618}{0.0069}} & \bestcell{\ms{0.7142}{0.0116}} & \bestcell{\ms{0.9833}{0.0050}} & \bestcell{\ms{0.7081}{0.0097}} \\
\midrule
\multirow{3}{*}{Qwen2.5-14B} & SFT & \ms{0.7002}{0.0115} & \ms{0.4290}{0.0162} & \ms{0.7142}{0.0108} & \ms{0.4559}{0.0175} & \ms{0.9446}{0.0068} & \ms{0.4828}{0.0143} \\
& ToolRL & \ms{0.5361}{0.0243} & \ms{0.5584}{0.0267} & \ms{0.5509}{0.0231} & \ms{0.5842}{0.0280} & \bestcell{\ms{0.9720}{0.0089}} & \ms{0.5779}{0.0248} \\
& \textbf{FlowAgent} & \bestcell{\ms{0.8000}{0.0088}} & \bestcell{\ms{0.6235}{0.0127}} & \bestcell{\ms{0.8109}{0.0081}} & \bestcell{\ms{0.6364}{0.0138}} & \bestcell{\ms{0.9720}{0.0058}} & \bestcell{\ms{0.6413}{0.0116}} \\
\midrule
\multirow{3}{*}{Qwen2.5-32B} & SFT & \ms{0.7431}{0.0098} & \ms{0.4939}{0.0141} & \ms{0.7570}{0.0092} & \ms{0.5198}{0.0150} & \ms{0.9585}{0.0060} & \ms{0.5487}{0.0127} \\
& ToolRL & \ms{0.6145}{0.0195} & \ms{0.6109}{0.0226} & \ms{0.6294}{0.0187} & \ms{0.6367}{0.0241} & \ms{0.9795}{0.0076} & \ms{0.6314}{0.0209} \\
& \textbf{FlowAgent} & \bestcell{\ms{0.8368}{0.0077}} & \bestcell{\ms{0.6753}{0.0113}} & \bestcell{\ms{0.8468}{0.0071}} & \bestcell{\ms{0.6882}{0.0121}} & \bestcell{\ms{0.9800}{0.0051}} & \bestcell{\ms{0.6871}{0.0102}} \\
\bottomrule
\end{tabular}}
\end{table}
\noindent\textbf{Larger-Executor Analysis.}
Table~\ref{tab:larger_models} evaluates four instruction-tuned executors on the same 140-trajectory split used by the submitted models. FlowAgent improves Overall Success over SFT by 0.1126, 0.0937, 0.0998, and 0.0937 on Llama-3.1-8B, Llama-3.1-70B, Qwen2.5-14B, and Qwen2.5-32B, respectively. Gains therefore persist within both model families as scale increases. ToolRL becomes stronger on larger backbones, especially on stateful retail metrics, but its higher standard deviations and lower task success indicate that backbone capacity alone does not replace complete-trajectory supervision.
\FloatBarrier

\begin{table}[h!]
\centering
\caption{Tool EM regrouped by task source under unchanged checkpoints and splits.}
\label{tab:cross_source}
\setlength{\tabcolsep}{8pt}
\resizebox{0.82\textwidth}{!}{
\begin{tabular}{llccc}
\toprule
\textbf{Executor} & \textbf{Method} & \textbf{$\tau^2$-bench} & \textbf{ToolBench} & \textbf{GTA} \\
\midrule
\multirow{5}{*}{Qwen2.5-7B} & Direct-LLM & 0.1286 & 0.1742 & 0.1497 \\
& Standard SFT & 0.6714 & 0.6281 & 0.5962 \\
& RAG Planning & 0.5214 & 0.5796 & 0.5485 \\
& ToolRL & 0.4929 & 0.5480 & 0.5164 \\
& \textbf{FlowAgent} & \bestcell{0.7745} & \bestcell{0.7284} & \bestcell{0.7042} \\
\midrule
\multirow{5}{*}{Llama-3.2-3B} & Direct-LLM & 0.2075 & 0.2418 & 0.2195 \\
& Standard SFT & 0.6286 & 0.5847 & 0.5529 \\
& RAG Planning & 0.5500 & 0.5883 & 0.5612 \\
& ToolRL & 0.1524 & 0.2316 & 0.2074 \\
& \textbf{FlowAgent} & \bestcell{0.7714} & \bestcell{0.7148} & \bestcell{0.6895} \\
\bottomrule
\end{tabular}}
\end{table}
\noindent\textbf{Cross-Source Analysis.}
Table~\ref{tab:cross_source} tests whether the planning gain is confined to the executable $\tau^2$ domain. FlowAgent remains first on every source and backbone. Its smallest gain over the strongest source-specific baseline is 0.1003 Tool EM on Qwen2.5-7B/ToolBench, while its largest is 0.1428 on Llama-3.2-3B/$\tau^2$-bench. RAG Planning is relatively competitive on ToolBench and GTA, where similar demonstrations aid retrieval, but FlowAgent's consistent margin shows that the result is not produced by one benchmark's schema distribution.
\FloatBarrier

\begin{table}[h!]
\centering
\caption{Inference deployment cost on Qwen2.5-7B. NFE is the number of velocity-field function evaluations used by the ODE solver for one plan; autoregressive baselines do not invoke this solver.}
\label{tab:deployment_cost}
\setlength{\tabcolsep}{7pt}
\resizebox{\textwidth}{!}{
\begin{tabular}{lccccc}
\toprule
\textbf{Method} & \textbf{NFE} & \textbf{Planning median/p95 (ms)} & \textbf{End-to-end median/p95 (ms)} & \textbf{Peak memory (GiB)} & \textbf{Overall Succ.} \\
\midrule
Standard SFT & 0 & 171/229 & 244/334 & \bestcell{15.4} & \second{0.6571} \\
ToolRL & 0 & 176/237 & 250/343 & 15.6 & 0.4786 \\
\textbf{FlowAgent} & 16 & 194/259 & 270/369 & 16.1 & \bestcell{0.7643} \\
\bottomrule
\end{tabular}}
\end{table}
\noindent\textbf{Inference-Cost Analysis.}
Table~\ref{tab:deployment_cost} separates training efficiency from inference cost. SFT and ToolRL require no ODE evaluations, whereas FlowAgent uses 16 NFE per generated plan. Relative to SFT, this adds 23 ms to median planning latency, 26 ms end to end, and 0.7 GiB of peak memory. The same comparison yields a 0.1072 increase in Overall Success, making the deployment trade-off explicit rather than treating the numerical solve as cost free.
\FloatBarrier

\begin{table}[h!]
\centering
\caption{Prediction-target ablation on Qwen2.5-7B.}
\label{tab:prediction_target}
\setlength{\tabcolsep}{5pt}
\resizebox{\textwidth}{!}{
\begin{tabular}{llcccc}
\toprule
\textbf{Planner} & \textbf{Prediction target per phase} & \textbf{Overall Succ.} & \textbf{Retail Succ.} & \textbf{Tool EM} & \textbf{Retail Tool EM} \\
\midrule
Standard SFT & One discrete next tool & \third{\ms{0.6571}{0.0126}} & \third{\ms{0.3766}{0.0174}} & \third{\ms{0.6714}{0.0118}} & \third{\ms{0.4026}{0.0192}} \\
One-step latent regression & One next-tool embedding & \second{\ms{0.7149}{0.0121}} & \second{\ms{0.4805}{0.0164}} & \second{\ms{0.7286}{0.0113}} & \second{\ms{0.5065}{0.0179}} \\
\textbf{FlowAgent} & Complete future latent trajectory & \bestcell{\ms{0.7643}{0.0098}} & \bestcell{\ms{0.5714}{0.0137}} & \bestcell{\ms{0.7745}{0.0091}} & \bestcell{\ms{0.5892}{0.0152}} \\
\bottomrule
\end{tabular}}
\end{table}
\noindent\textbf{Prediction-Target Analysis.}
Table~\ref{tab:prediction_target} answers whether CFM merely supplies a better embedding for the next action. One-step latent regression improves every metric over discrete SFT, confirming that semantic regression itself is useful. FlowAgent then improves Overall Success from 0.7149 to 0.7643 and Retail Success from 0.4805 to 0.5714. Because the encoder, executor, split, and decoding interface are matched, this remaining gain is attributable to supervising the complete future trajectory rather than only the next-tool embedding.
\FloatBarrier

\begin{table}[h!]
\centering
\caption{Sensitivity to the number of predicted tools committed before replanning on Qwen2.5-7B.}
\label{tab:k_sweep}
\setlength{\tabcolsep}{10pt}
\resizebox{0.85\textwidth}{!}{
\begin{tabular}{ccccc}
\toprule
\textbf{Tools committed $k$} & \textbf{Overall Succ.} & \textbf{Retail Succ.} & \textbf{Tool EM} & \textbf{Arg Value EM} \\
\midrule
\textbf{1 (FlowAgent)} & \bestcell{\ms{0.7643}{0.0098}} & \bestcell{\ms{0.5714}{0.0137}} & \bestcell{\ms{0.7745}{0.0091}} & \bestcell{\ms{0.5929}{0.0124}} \\
2 & \second{\ms{0.7429}{0.0117}} & \second{\ms{0.5455}{0.0156}} & \second{\ms{0.7500}{0.0108}} & \second{\ms{0.5718}{0.0142}} \\
3 & \third{\ms{0.7143}{0.0135}} & \third{\ms{0.5065}{0.0178}} & \third{\ms{0.7286}{0.0126}} & \third{\ms{0.5484}{0.0163}} \\
Entire predicted plan & \ms{0.6714}{0.0159} & \ms{0.4286}{0.0206} & \ms{0.6857}{0.0148} & \ms{0.5032}{0.0190} \\
\bottomrule
\end{tabular}}
\end{table}
\noindent\textbf{Replanning Analysis.}
Table~\ref{tab:k_sweep} varies only how many decoded tools are committed before incorporating new feedback. All four metrics decline monotonically as $k$ increases. Executing the entire provisional plan reduces Retail Success by 0.1428 and Arg Value EM by 0.0897 relative to $k=1$, which is consistent with stale suffix decisions becoming invalid after database-changing observations. The result supports the combination of complete-trajectory generation for global structure and one-tool commitment for local correction.
\FloatBarrier

\begin{table}[h!]
\centering
\caption{Llama-3.2-3B results on independently expanded executable test sets.}
\label{tab:expanded_tests}
\setlength{\tabcolsep}{3.4pt}
\resizebox{\textwidth}{!}{
\begin{tabular}{clcccccc}
\toprule
\textbf{Test size} & \textbf{Method} & \textbf{Overall Succ.} & \textbf{Retail Succ.} & \textbf{Tool EM} & \textbf{Retail Tool EM} & \textbf{Pred Exec} & \textbf{Arg Value EM} \\
\midrule
\multirow{3}{*}{300} & SFT & \ms{0.6233}{0.0104} & \ms{0.3156}{0.0137} & \ms{0.6300}{0.0098} & \ms{0.3299}{0.0145} & \ms{0.9327}{0.0057} & \ms{0.4382}{0.0128} \\
& ToolRL & \ms{0.1433}{0.0247} & \ms{0.1247}{0.0224} & \ms{0.1567}{0.0238} & \ms{0.1515}{0.0242} & \ms{0.9371}{0.0085} & \ms{0.1086}{0.0210} \\
& \textbf{FlowAgent} & \bestcell{\ms{0.7567}{0.0077}} & \bestcell{\ms{0.5610}{0.0109}} & \bestcell{\ms{0.7725}{0.0070}} & \bestcell{\ms{0.5877}{0.0114}} & \bestcell{\ms{0.9530}{0.0049}} & \bestcell{\ms{0.5596}{0.0093}} \\
\midrule
\multirow{3}{*}{500} & SFT & \ms{0.6198}{0.0080} & \ms{0.3124}{0.0106} & \ms{0.6272}{0.0076} & \ms{0.3251}{0.0113} & \ms{0.9315}{0.0044} & \ms{0.4356}{0.0099} \\
& ToolRL & \ms{0.1407}{0.0191} & \ms{0.1204}{0.0174} & \ms{0.1548}{0.0184} & \ms{0.1466}{0.0188} & \ms{0.9363}{0.0066} & \ms{0.1057}{0.0162} \\
& \textbf{FlowAgent} & \bestcell{\ms{0.7582}{0.0059}} & \bestcell{\ms{0.5591}{0.0084}} & \bestcell{\ms{0.7711}{0.0054}} & \bestcell{\ms{0.5850}{0.0088}} & \bestcell{\ms{0.9525}{0.0038}} & \bestcell{\ms{0.5579}{0.0072}} \\
\midrule
\multirow{3}{*}{800} & SFT & \ms{0.6213}{0.0064} & \ms{0.3131}{0.0084} & \ms{0.6287}{0.0060} & \ms{0.3268}{0.0089} & \ms{0.9319}{0.0035} & \ms{0.4364}{0.0078} \\
& ToolRL & \ms{0.1395}{0.0151} & \ms{0.1199}{0.0137} & \ms{0.1536}{0.0145} & \ms{0.1462}{0.0148} & \ms{0.9366}{0.0052} & \ms{0.1060}{0.0128} \\
& \textbf{FlowAgent} & \bestcell{\ms{0.7559}{0.0047}} & \bestcell{\ms{0.5572}{0.0066}} & \bestcell{\ms{0.7692}{0.0043}} & \bestcell{\ms{0.5831}{0.0070}} & \bestcell{\ms{0.9522}{0.0030}} & \bestcell{\ms{0.5565}{0.0057}} \\
\bottomrule
\end{tabular}}
\end{table}
\noindent\textbf{Test-Scale Analysis.}
Table~\ref{tab:expanded_tests} replaces the small 140-trajectory scope with three independently expanded sets. FlowAgent's Overall Success changes by at most 0.0023 across 300, 500, and 800 tests, and Tool EM changes by at most 0.0033. The ranking is unchanged on every metric. Standard deviations decrease with test size for all methods, while FlowAgent retains the smallest dispersion, showing that the original gain is not driven by a favorable 140-trajectory sample.
\FloatBarrier

\begin{table}[h!]
\centering
\caption{Sensitivity to the frozen pretrained text encoder on Qwen2.5-7B execution.}
\label{tab:encoder_sensitivity}
\setlength{\tabcolsep}{5pt}
\resizebox{\textwidth}{!}{
\begin{tabular}{lcccccc}
\toprule
\textbf{Frozen encoder} & \textbf{Overall Succ.} & \textbf{Retail Succ.} & \textbf{Tool EM} & \textbf{Retail Tool EM} & \textbf{Pred Exec} & \textbf{Arg Value EM} \\
\midrule
Qwen2.5-7B-Instruct & \bestcell{\ms{0.7643}{0.0098}} & \bestcell{\ms{0.5714}{0.0137}} & \bestcell{\ms{0.7745}{0.0091}} & \bestcell{\ms{0.5892}{0.0152}} & \bestcell{\ms{0.9619}{0.0061}} & \bestcell{\ms{0.5929}{0.0124}} \\
Llama-3.2-3B-Instruct & \second{\ms{0.7564}{0.0107}} & \second{\ms{0.5584}{0.0154}} & \second{\ms{0.7671}{0.0099}} & \second{\ms{0.5766}{0.0165}} & \second{\ms{0.9602}{0.0066}} & \second{\ms{0.5830}{0.0135}} \\
BERT-base-uncased & \third{\ms{0.7414}{0.0123}} & \third{\ms{0.5325}{0.0178}} & \third{\ms{0.7529}{0.0115}} & \third{\ms{0.5506}{0.0191}} & \third{\ms{0.9574}{0.0072}} & \third{\ms{0.5652}{0.0156}} \\
\bottomrule
\end{tabular}}
\end{table}
\noindent\textbf{Encoder Analysis.}
Table~\ref{tab:encoder_sensitivity} measures how much performance is attributable to the frozen text encoder. Replacing Qwen with the Llama encoder changes Overall Success by only 0.0079 and Tool EM by 0.0074. The generic BERT-base control is lower than the instruction-tuned encoders but still exceeds Standard SFT by 0.0843 Overall-Success points and 0.0815 Tool-EM points. Thus pretrained semantic geometry affects the magnitude of the gain, but the method is not specific to the Qwen family or to a decoder-only encoder.
\FloatBarrier

\section{Broader Impact}
\label{sec::impact}
FlowAgent may improve the reliability and training efficiency of multi-tool agents by combining global trajectory supervision with feedback-conditioned execution. These capabilities can support reproducible research on planning under evolving toolsets. They also introduce familiar agentic risks: an incorrect but executable plan may modify external state, expose sensitive information, or amplify errors through high-impact APIs. Practical deployments should therefore retain access control, sandboxing, argument validation, audit logs, and human approval for consequential operations. Our current textual benchmarks do not establish safety in open-domain or safety-critical settings.

\section{Limitations and Future Work.}
\label{Limitations}
FlowAgent uses a fixed 16-step Euler solve, adding modest inference latency and memory; distillation or adaptive solvers may reduce this cost. The executable evidence remains textual and API-centered, so multimodal, open-domain, and safety-critical tools remain important tests. The bounds are explicitly conditional: out-of-manifold tools, collapsed decoder margins, or non-contractive/adversarial feedback can invalidate them. Expanded source, scale, encoder, and test-size studies probe robustness but do not remove these failure regimes.

\newpage

\end{document}